\newcommand{\genComment}[2]{\ifnum\comments=1{\textcolor{#1}{\textsf{\footnotesize #2}}}\fi}
\newtheorem{lemma}{Lemma}
\newtheorem{theorem}{Theorem}
\title{MHER: Model-based Hindsight Experience Replay}
\author{%
  Rui Yang\thanks{Work was done during the internship with Tencent Robotics X.}  $\ ^1$ , Meng Fang$^{2,3}$, Lei Han$^2$, Yali Du$^4$, Feng Luo$^1$, Xiu Li$^1$\\
  $^1$Tsinghua University, $^2$Tencent Robotics X \\ $^3$Eindhoven University of Technology, $^4$King’s College London \\
  \texttt{yangrui19@mails.tsinghua.edu.cn, m.fang@tue.nl, lxhan@tencent.com } \\
  \texttt{yali.du@kcl.ac.uk, \{luof19@mails, li.xiu@sz\}.tsinghua.edu.cn}\\
}
\begin{document}

\maketitle

\begin{abstract}
Solving multi-goal reinforcement learning (RL) problems with sparse rewards is generally challenging. Existing approaches have utilized goal relabeling on collected experiences to alleviate issues raised from sparse rewards. However, these methods are still limited in efficiency and cannot make full use of experiences. In this paper, we propose \emph{Model-based Hindsight Experience Replay} (MHER), which exploits experiences more efficiently by leveraging environmental dynamics to generate virtual achieved goals. Replacing original goals with virtual goals generated from interaction with a trained dynamics model leads to a novel relabeling method, \emph{model-based relabeling} (MBR). Based on MBR, MHER performs both reinforcement learning and supervised learning for efficient policy improvement. Theoretically, we also prove the supervised part in MHER, i.e., goal-conditioned supervised learning with MBR data, optimizes a lower bound on the multi-goal RL objective. Experimental results in several point-based tasks and simulated robotics environments show that MHER achieves significantly higher sample efficiency than previous model-free and model-based multi-goal methods.

\end{abstract}

\section{Introduction}
Although reinforcement learning (RL) has been shown to be effective in a range of reward-driven problems \cite{mnih2015human,lillicrap2015continuous,haarnoja2018soft,DBLP:journals/corr/abs-2106-03050}, current RL algorithms require massive amounts of training data \cite{wang2016learning} and lack sample efficiency in sparse reward settings \cite{andrychowicz2017hindsight}.
In multi-goal RL, the problem of efficiency becomes more prominent as agents are required to accomplish multiple goals simultaneously. One of the most essential factors affecting sample efficiency in multi-goal RL is the sparse reward, in which case informative learning signals are very limited. Previous works have proposed many solutions such as reward shaping \cite{ng1999policy}, curriculum learning \cite{bengio2009curriculum}, exploration \cite{pathak2017curiosity,zhao2019maximum}, and hindsight relabeling \cite{DBLP:conf/ijcai/Kaelbling93,andrychowicz2017hindsight}. Among those solutions, learning from mistakes is a useful strategy to handle sparse rewards in multi-goal RL settings. Hindsight Experience Replay (HER) \cite{andrychowicz2017hindsight} remarkably improves sample efficiency through goal-relabeling that relabels failed experiences with actually achieved goals. Following HER, a few works are put forward to improve goal sampling methods \cite{fang2019curriculum,pitis2020maximum,DBLP:conf/nips/LiPA20,DBLP:conf/nips/EysenbachGLS20}, or utilize hindsight knowledge for supervised policy learning \cite{DBLP:conf/nips/SunLLZL19,ghosh2021learning} and adversarial imitation learning \cite{ding2019goal}.

Goal relabeling provides a practical paradigm for generating pseudo demonstrations to train control policies, and deep reinforcement learning algorithms further improve upon the efficiency of relabeling strategies~\cite{DBLP:conf/ijcai/Kaelbling93, andrychowicz2017hindsight}.
Most goal relabeling methods depend on trajectories and goals that an agent collects from environments. 
However, intelligent agents can achieve goals in complex environments even though they never encounter the exact same situation \cite{schaul2015universal}. This ability requires building representations of the dynamics from past experience that enable generalization to novel situations \cite{schaul2013better,singh2003learning}. Modeling dynamics offers an explicit way to represent an agent’s knowledge about the task. Existing methods \cite{nagabandi2018neural,DBLP:conf/nips/JannerFZL19,hafner2019dream,schrittwieser2020mastering} have apply neural models to greatly facilitate predicting physical dynamics and the consequences of actions, and provide a strong inductive bias for generalization to novel environment situations. With the dynamics model and the current policy, we can predict future states along with achieved goals. Can we exploit goals generated from model-based interaction for sample efficient multi-goal reinforcement learning?

In this paper, we propose a novel framework, \emph{Model-based Hindsight Experience Replay} (MHER), utilizing environmental dynamics to handle sparse rewards. In MHER, we introduce a new relabeling method, \emph{model-based relabeling}, and then minimize a joint loss based on the model-based relabeled data for efficient policy learning. Unlike previous hindsight relabeling methods that relabel transitions with goals achieved at a later point during the same trajectory, model-based relabeling leverages dynamics models to generate pseudo goals for guiding the learning of the policy, as shown in Figure \ref{fig:diagram}. The pseudo goals are collected in an efficient way without interaction with environments. Then goal-relabeling through imagined trajectories allows an agent to re-interpret its actions using a different goal from the perspective of the latest policy, leading to an implicit curriculum of goal relabeling guided by the policy. With the model-based relabeled data, we apply both supervised learning and reinforcement learning to update the policy. We theoretically prove that the policy can be improved by minimizing the goal-conditioned supervised learning loss \cite{ghosh2021learning} with the model-based relabeled data. To evaluate the performance of MHER, we conduct experiments on both point-based and Mujoco environments. Experimental results \footnote{\href{https://github.com/YangRui2015/Model-basedHER}{https://github.com/YangRui2015/Model-basedHER}.} show that MHER achieves significantly higher sample efficiency than previous works such as HER, Curriculum-guided HER \cite{fang2019curriculum}, Maximum Entropy-based Prioritization \cite{zhao2019maximum}, and Goal-Conditioned Supervised Learning \cite{ghosh2021learning}. 


The main contributions of this paper can be summarized as follows:
\begin{itemize}
    \item We present a new goal relabeling method, model-based relabeling, leveraging dynamics models to handle sparse rewards in multi-goal RL;
    \item We apply supervised learning on the relabeled data and introduce a joint loss for RL training. We also prove that minimizing the supervised loss using the model-based relabeled data is equivalent to optimizing a lower bound on the original multi-goal RL objective;
    \item Empirical results on several benchmark environments demonstrate that the proposed method, MHER, exceeds previous multi-goal RL algorithms in sample efficiency.
\end{itemize}

 \begin{figure}[t]
    \centering
    \subfigure[]{
    \begin{minipage}[t]{0.45\linewidth}
        \centering
        \includegraphics[width=1\linewidth]{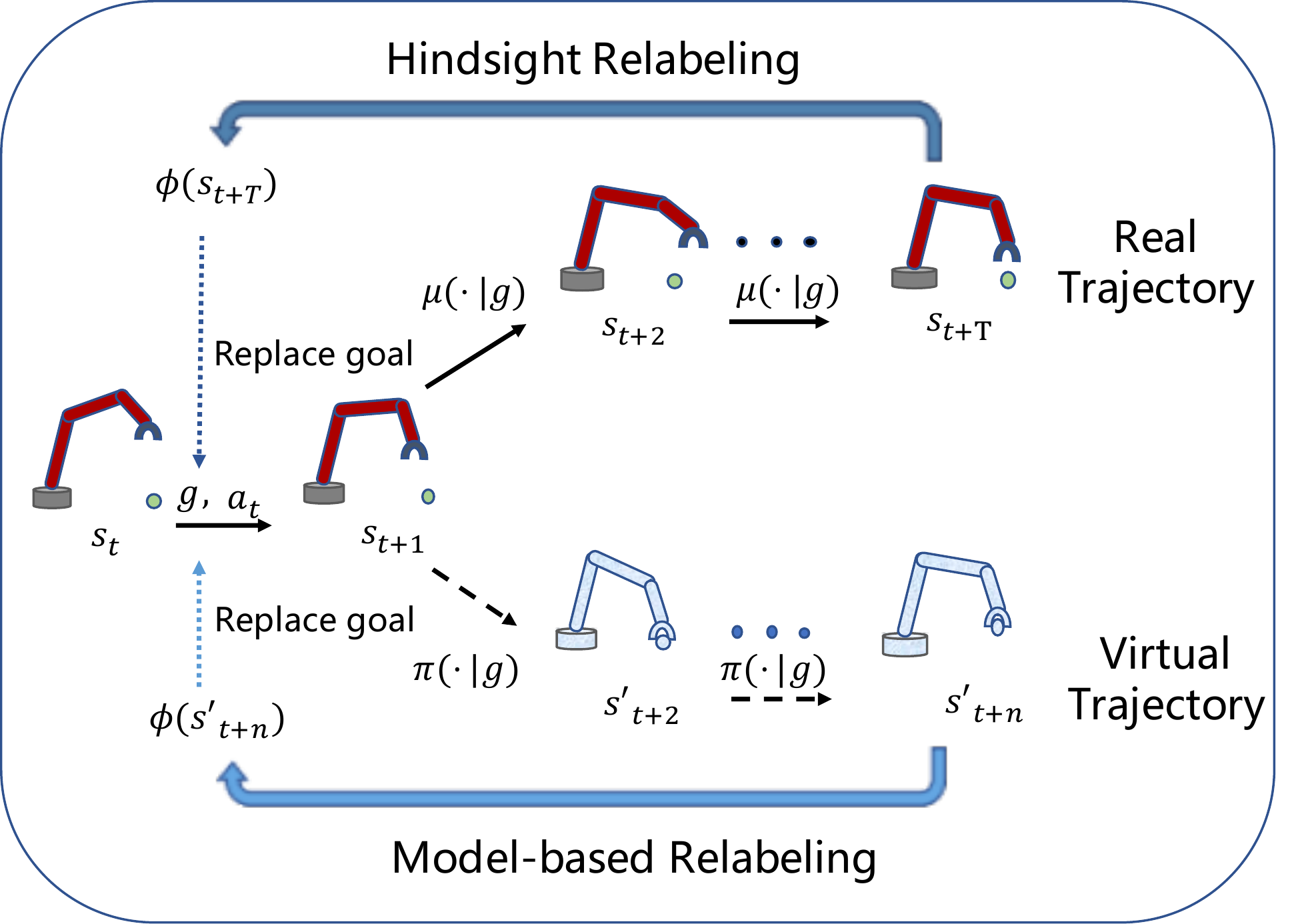}\\
    \end{minipage}%
    }%
    \subfigure[]{
    \begin{minipage}[t]{0.55\linewidth}
        \centering
        \includegraphics[width=1\linewidth]{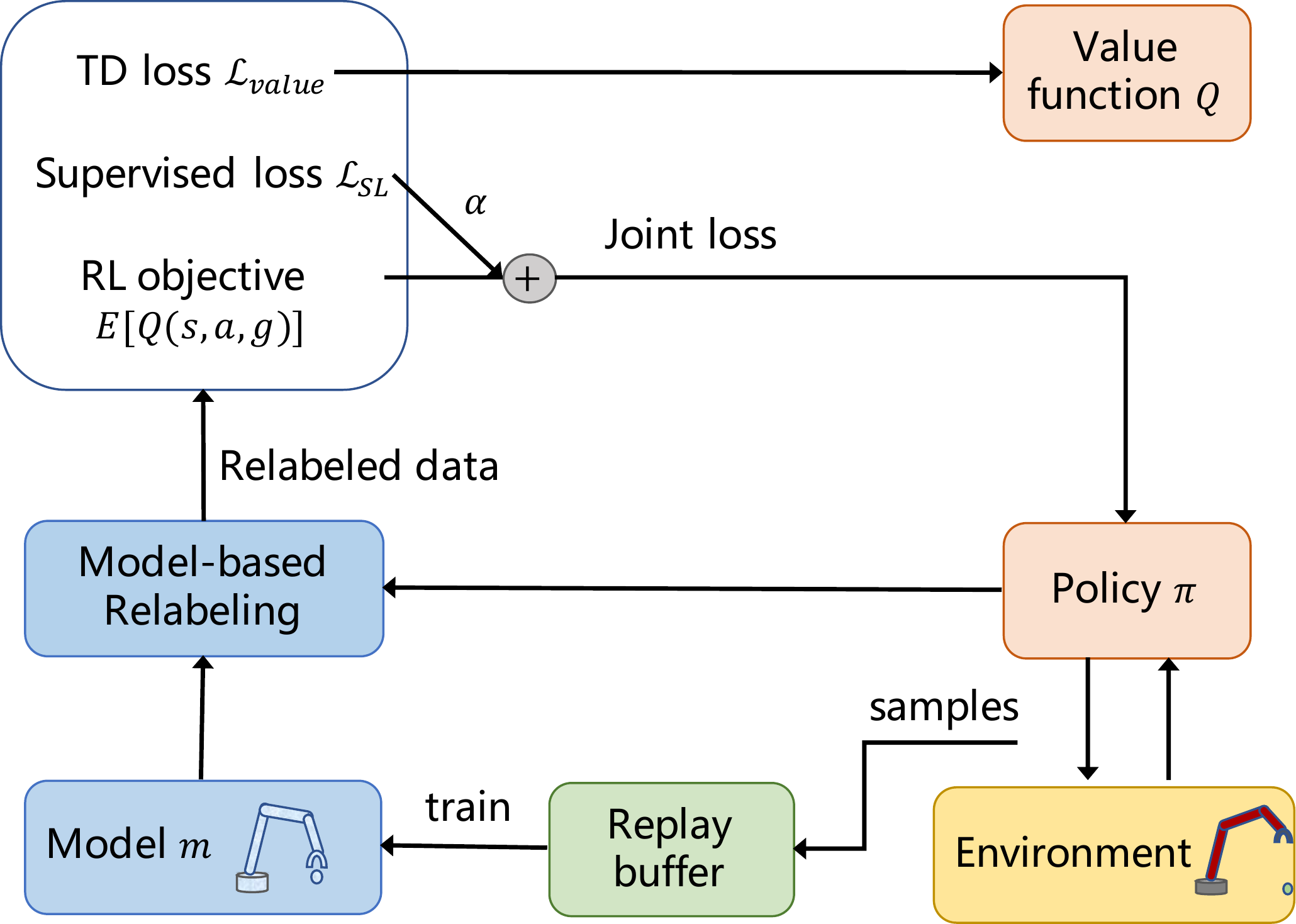}\\
    \end{minipage}%
    }%
    \caption{(a) Diagram of model-based relabeling. The real trajectory is collected by a past behavior policy $\mu$ and the virtual trajectory is generated from the interaction of the current policy $\pi$ and a trained dynamics model. Hindsight relabeling such as HER uses real achieved goals (e.g., $\phi(s_{t+T})$, $\phi$ is a state-to-goal mapping) to relabel, while model-based relabeling utilizes virtual achieved goals (e.g., $\phi(s'_{t+n})$). (b) Overall framework of MHER. Detailed descriptions can be found in Section \ref{sec:algorithm}.}
    \label{fig:diagram}
\end{figure}

\section{Related Work}
Our work concentrates on sample efficiency in multi-goal reinforcement learning (RL) with sparse rewards. Hindsight Experience Replay (HER) \cite{andrychowicz2017hindsight} introduces hindsight relabeling for multi-goal RL, opening up a new way to learn from failed experiences with sparse and binary rewards. Based on HER, a few studies have been investigated to find more efficient goal sampling ways. Curriculum-guided HER (CHER) \cite{fang2019curriculum} selects goals in a heuristic way to balance the diversity of selected goals and the proximity to original goals. Focusing on the long horizon problem in multi-goal RL, Maximum Entropy Goal Achievement \cite{pitis2020maximum} samples from the frontier of achieved goals and gradually increases the entropy of achieved goals. Recent woks \cite{DBLP:conf/nips/LiPA20,DBLP:conf/nips/EysenbachGLS20} view hindsight relabeling as the inverse RL and sample relabeling goals according to their cumulative return or value function. Considering the visual tasks, \cite{DBLP:conf/nips/SahniBAK19} leverages GAN to generate virtual goals for relabeling, but it requires collecting a special dataset of near-goal states. In contrast to previous works, relabeling goals of MHER are generated from the interaction between current policy and a learned dynamics model.

In addition to improving the goal sampling methods, other works introduce prioritization replay and supervised learning to address sample inefficiency in multi-goal RL. Energy-Based Prioritization proposes to more frequently replay trajectories with higher energy. Maximum Entropy-based Prioritization (MEP) \cite{zhao2019maximum} prioritizes experiences based on trajectory entropy. By incorporating a few expert demonstrations, \emph{goalGAIL} \cite{ding2019goal} significantly speeds up the convergence of policy via adversarial imitation learning. Self-supervised methods provides another simple but effective way for multi-goal RL. Policy Continuation with Hindsight Inverse Dynamics (PCHID) \cite{DBLP:conf/nips/SunLLZL19} extends hindsight inverse dynamics to the multi-step situation following dynamic programming, thus enabling learning in a self-imitated scheme. Goal-Conditioned Supervised Learning (GCSL) \cite{ghosh2021learning} provides theoretical guarantees that supervised learning from hindsight relabeled experiences optimizes a lower bound on the goal-oriented RL objective. Unlike PCHID and GCSL which both use real achieved goals for supervision, we leverage virtual goals for supervision and provide theoretical guarantees that supervised learning with virtual goals can lead to policy improvement.

Model-based RL algorithms have been studied for a long history and generally obtain higher sample efficiency over model-free algorithms \cite{atkeson1997comparison}. Dyna \cite{sutton1991dyna} generates virtual samples with a trained dynamics model to augment training data. Learned dynamics models can also be incorporated into model-free algorithms to accelerate learning of policies and value functions \cite{nagabandi2018neural}. Model-based value expansion (MVE) \cite{feinberg2018model} improves model-free value estimation with predictive transitions. To tackle with the model bias in learned models, STEVE \cite{buckman2018sample} uses an ensemble of models for a robust prediction. Model-Based Policy Optimization (MBPO) \cite{DBLP:conf/nips/JannerFZL19} proves a monotonic improvement with limited use of a predictive model. However, most of the model-based methods are designed for tasks with dense rewards. PlanGAN \cite{DBLP:conf/nips/CharlesworthM20} is the first algorithm to use GANs and models for planning in sparse-reward multi-goal tasks. But PlanGAN is a planning method and requires a huge amount of computation to simulate trajectories for selecting a single action, and it may also suffer from the accumulated model-based errors in the long planning horizon. Different from these works, our work only utilizes the virtual achieved goals to avoid training with full virtual states and alleviate the impact of model bias.

\section{Preliminaries}
\label{sec:preliminary}
\subsection{Multi-goal Reinforcement Learning}
Multi-goal reinforcement learning (RL) can be characterized by the tuple $\langle S, A, G, r, p, \gamma\rangle$, where $S, A, G, p, \gamma$ refer to state space, action space, goal space, transition function, and discount factor, respectively, and $r: S \times A \times G \rightarrow \mathbb{R}$ is the goal-conditioned reward function \cite{schaul2015universal}. A commonly used sparse reward function in multi-goal setting is defined as:
\begin{equation}
\label{equ:rewardfunction}
    r(s_t, a_t,g)=\begin{cases}0, & \left|\left|\phi(s_{t})-g\right|\right|^2_2 < \text{threshold} \cr -1, &\text{otherwise}\end{cases} ,
\end{equation}
where $\phi: S \rightarrow G$ is a mapping function from states to achieved goals and is assumed to be available \cite{andrychowicz2017hindsight}. Agents are required to learn a policy $\pi:S\times G \rightarrow A$ to maximize returns of reaching goals from the desired distribution $p(g)$:
\begin{equation}
\label{eq:objective}
    J(\pi)=E_{g\sim p(g),a_t \sim \pi, s_{t+1}\sim p(\cdot|s_t,a_t)}\big[\sum^{\infty}_{t=0} \gamma^t r(s_t, a_t, g) \big]
\end{equation}

\subsection{Hindsight Experience Replay}
\label{sec:HER}
Hindsight Experience Replay (HER) learns from failed experiences and tackles with sparse rewards in multi-goal RL. Given a trajectory $\tau = \{(s_t,a_t,g, r_t,s_{t+1}) \}_{t=1}^{T}$ of length $T$, HER alternates $g$ and $r_t$ in the $t$-th transition $(s_t,a_t,g, r_t,s_{t+1})$ with a future achieved goal in the same trajectory $g'=\phi(s_{t+k}), 1 \leq k \leq T-t$ and $r'_t=r(s_t,a_t,g')$ computed by Eq. \eqref{equ:rewardfunction}. After relabeling, transitions in failed trajectories can be assigned nonnegative rewards, therefore HER addresses the core issue of sparse rewards. HER can be combined with any off-policy algorithms such as DQN \cite{mnih2015human}, DDPG \cite{lillicrap2015continuous}, TD3 \cite{fujimoto2018addressing}, and SAC \cite{haarnoja2018soft}.

In our paper, we adopt the DDPG+HER framework following \cite{andrychowicz2017hindsight,fang2019curriculum}. DDPG is an off-policy actor-critic algorithm consisting of a deterministic policy $\pi$ and a value function $Q: S\times A\times G \rightarrow \mathbb{R}$. When collecting data with the policy $\pi$, Gaussian noise with zero mean and constant standard deviation is applied for exploration, as implemented in \cite{fujimoto2018addressing}. We denote the data distribution after hindsight relabeling as $B_h$. The value function $Q$ is updated to minimize the TD error:
\begin{equation}
\label{eq:loss_critic}
\mathcal{L}_{critic}=E_{(s_t,a_t,g', r'_t,s_{t+1})\sim B_h} \big[(y_t-Q(s_t,a_t, g'))^2 \big],
\end{equation}
where
\begin{equation*}
y_t = r'_t + \gamma Q (s_{t+1}, \pi (s_{t+1}, g'), g').
\end{equation*}
The policy $\pi$ is trained with policy gradient on the following loss:
\begin{equation*}
\mathcal{L}_{actor} = -E_{(s_t,g') \sim B_h} \big[ Q(s_t, \pi(s_t,g'), g') \big].
\end{equation*}

\section{Methodology}
In this section we will describe how our method, MHER, integrates environment dynamics into Hindsight Experience Replay \cite{andrychowicz2017hindsight} for training goal-conditioned policies. First we introduce the way we train the dynamics model that maps current state and action to the next state \cite{sutton1991dyna,feinberg2018model}. Then we propose a novel model-based relabeling technique based on the learned model and introduce a joint loss combining policy gradient \cite{silver2014deterministic,lillicrap2015continuous} and goal-conditioned supervised learning \cite{ghosh2021learning} on the model-based relabeled data. Finally, we describe the overall framework of MHER.

\subsection{Dynamics Models}
When considering deterministic dynamics, the most direct way of training dynamics models is to learn to predict the next state given current state and action. Let $m(s_t,a_t)$ denote a learned dynamics function that takes the current state $s_t$ and action $a_t$ as input and outputs an estimation of the next state $s_{t + 1}$. 
However, this model can be difficult to learn when the states $s_t$ and $s_{t+1}$ are very similar and the action has seemingly little
effect on the changes in continuous control environments~\cite{nagabandi2018neural}.
Therefore we follow~\cite{nagabandi2018neural} to learn a dynamics function that predicts the change between states by minimizing the following loss:
\begin{equation}
\label{eq:dynamicloss}
    \mathcal{L}_{model} = E_{(s_t,a_t,s_{t+1})\sim B} \big[\|(s_{t+1} -s_t) - m(s_t,a_t)\|_2^2\big],
\end{equation}
where the data $(s_t,a_t,s_{t+1})$ is sampled from the replay buffer $B$. Note that the dynamics model is independent of goals and rewards, therefore the data to train the model $m$ can also be sampled from the relabeled data distribution such as $B_h$.
With the trained dynamics model, current state $s_t$ and action $a_t$, we can predict the next state as $s_{t+1}=s_t + m(s_t,a_t)$. 



\subsection{Model-based Relabeling}
\label{sec:MBR}
The insight of model-based relabeling (MBR) is that states in the virtual trajectory generated by the dynamics model can also be viewed as achieved goals for the starting state, so it can be used for relabeling the original transition. As shown in Figure \ref{fig:diagram}, given a transition $(s_t, a_t, s_{t+1}, r_t, g)$ collected by a behavior policy, MBR starts at $s_{t+1}$ and interacts with the dynamics model $m$ using current policy $\pi$ for $n$ steps. After interaction with the model, we have a virtual trajectory of $\{s'_{t+i}, a'_{t+i}, s'_{t+i+1}\}_{i=0}^{n}$, where $i=0$ is the original transition and $a'_{t+i}=\pi(s'_{t+i},g), s'_{t+i+1}=s'_{t+i} + m(s'_{t+i}, a'_{t+i}), 1 \leq i\leq n$. Then, any state in the virtual trajectory implies an achieved goal guided by current policy. MBR randomly samples from the virtual achieved goals $g'=\phi(s'_{t+j}), 1 \leq j \leq n$ to relabel the original transition: $(s_t, a_t, s_{t+1}, r'_t, g')$, where $\phi$ is the state-to-goal mapping and $r'_t=r(s_t,a_t,g')$ is computed according to Eq. \eqref{equ:rewardfunction}. In our relabeled transitions, only the goal $g'$ is virtually generated and other items are real experiences, avoiding the usage of full virtual states for training the value function. More detailed descriptions are provided in Appendix \ref{ap:virtual_ags}. Note that the model-based interaction is driven by the current policy $\pi$ under original goals $g$, thus MBR gradually pushes the relabeling goals towards the desired goal as the policy $\pi$ improves. Theoretically, optimizing $J(\pi)$ is equivalent to minimizing an upper bound on the expected distance between virtual achieved goals and original desired goals. The proof can be found in Appendix \ref{ap:theorem_goals}. We will also verify this property empirically in Section \ref{sec:curriculum_relabeling}. 

Two major advantages of model-based relabeling over previous relabeling strategies are as follows: 
\begin{itemize}
    \item[(1)] MBR takes advantages of the current policy and environmental dynamics to generate more diverse goals for accelerating policy learning.
    \item[(2)] As the policy improves, relabeling goals will gradually approach the assigned targets, therefore an implicit curriculum of automatic relabeling is introduced.
\end{itemize}

\begin{algorithm}[tb]
	\caption{MHER Framework 
	\label{framwork}} 
	Given model-based interaction steps $n$, and $\alpha$ in the joint policy loss $\mathcal{L}_{joint}$\;
	Initialize policy $\pi$ and value function $Q$\;
	Warmup the dynamics model $m$ by minimizing $\mathcal{L}_{model}$ in Eq. (\ref{eq:dynamicloss}) with random samples\;
		\For {episode = $1,2,\ldots,M$}{
		    Sample a desired goal $g$\;
		    Collect a trajectory with the policy $\pi$ and save to the replay buffer $B$\;
		    Sample a minibatch $b$ from the replay buffer $:\{(s_t,a_t,s_{t+1},r_t,g)_i\}^N_{i=1}\sim B$\;
		    Update the dynamics model $m$ with $b$\;
		    \For {$i=1,2, \ldots, N$}{
		        $(s_t,a_t,s_{t+1},r_t,g) \leftarrow b_i$\;
		        \tcp{model-based relabeling}
		            $s'_{t+1}=s_{t+1}$, $v = \{s'_{t+1}\}$ \;
		            \For{$j=1,2,\ldots,n$}{
		                $a'_{t+j} = \pi(s'_{t+j},g)$\;
		                $s'_{t+j+1} = s'_{t+j} + m(s'_{t+j}, a'_{t+j})$\;
		                Append $s'_{t+j+1}$ to $v$ \;
		            }
		            Sample random future state $s\sim v$\;
		            Get virtual achieved goal $ g'=\phi(s)$\;
		            Recompute $r'=r(s_t,a_t,g')$ using reward function in Eq. \eqref{equ:rewardfunction}\;
		            $b_i \leftarrow (s_t, a_t, r', s_{t+1}, g')$\;
		    }
		    Update value function $Q$ with $b$ to minimize $\mathcal{L}_{value}$ in Eq. \eqref{eq:loss_value}\;
		    Update policy $\pi$ with $b$ to minimize $\mathcal{L}_{joint}$ in Eq. \eqref{eq:loss_joint} \;
		}
\end{algorithm}

\subsection{Learning Policy with Model-based Relabeled Data}
After model-based relabeling (MBR), we further perform policy update based on the relabeled data. Similar to \cite{ghosh2021learning}, the policy can be optimized through supervised learning on the model-based relabeled data. However, the difference is that we use virtually generated goals $g'$ for supervision and minimize a mean square error rather than optimize the maximum likelihood, which are substantially the same under certain assumptions. We denote the model-based relabeled transition distribution as $B_m$, then the supervised loss $\mathcal{L}_{SL}$ is defined as:
\begin{equation}
\label{eq:loss_MGSL}
    \mathcal{L}_{SL} = E_{(s_t,a_t,g') \sim B_m} \big[\|a_t - \pi(s_t,g')\|_2^2 \big],
\end{equation}
where $g'$ is relabeled using MBR. Theoretically, we prove that minimizing $\mathcal{L}_{SL}$ in Eq. (\ref{eq:loss_MGSL}) is equivalent to maximizing a lower bound of $J(\pi)$, which is formally presented below.
\begin{theorem}
\label{tm:lowerbound}
    Given a Diagonal Gaussian policy with a mean vector $\pi(s,g')$ and a non-zero positive constant variance $\sigma^2$, the discount factor $\gamma$, the real environmental dynamics $p(\cdot|s,a)$, a learned dynamics model $p_m(\cdot|s,a)$, the model-based relabeled data distribution $B_m$, and $n$-step model-based interactions, minimizing the supervised loss $\mathcal{L}_{SL}$ in Eq. (\ref{eq:loss_MGSL}) is equivalent to maximizing a lower bound on the multi-goal RL objective.
    \begin{equation*}
        J(\pi) \geq -\frac{n\gamma^n}{2\sigma^2} \mathcal{L}_{SL} + C_1 \epsilon_{m} + C_2 ,
    \end{equation*}
    where $C_1, C_2$ are two constants independent of policy $\pi$. $\epsilon_m$ is the model error bounded at each timestep $t$: $\epsilon_m=max_t E_{s\sim \pi_{D,t}} [D_{TV}(p(s'|s,a)\| p_m(s'|s,a))]$, and $\pi_D$ is the data collecting policy of $B_m$.
\end{theorem}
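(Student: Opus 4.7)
The plan is to split the proof into three conceptual pieces: a log-likelihood reformulation of the MSE loss, a GCSL-style lower bound on the multi-goal return under the \emph{learned} dynamics $p_m$, and a simulation-lemma correction that accounts for the mismatch between $p_m$ and the true dynamics $p$.

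First I would exploit the Diagonal Gaussian assumption on the policy: since $\pi(a|s,g') = \mathcal{N}(a;\pi(s,g'),\sigma^2 I)$, we have $\log \pi(a|s,g') = -\tfrac{1}{2\sigma^2}\|a-\pi(s,g')\|_2^2 + C_0$, where $C_0$ depends only on $\sigma$ and the action dimension. Taking expectations over $B_m$ turns $\mathcal{L}_{SL}$ into $-2\sigma^2\,\mathbb{E}_{B_m}[\log \pi(a_t|s_t,g')]$ up to a $\pi$-independent constant, so it suffices to produce a lower bound on $J(\pi)$ of the form ``expected log-likelihood on $B_m$ plus constants that do not depend on $\pi$''.

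Next I would run a GCSL-style argument inside the \emph{imagined} MDP whose dynamics are $p_m$. The key observation from Section~\ref{sec:MBR} is that, by construction, a relabeled transition $(s_t,a_t,g'=\phi(s'_{t+j}))$ comes from a process that \emph{does} reach $g'$ in $j\le n$ model-based steps, so the conditional distribution of $(s_t,a_t)$ given $g'$ under $B_m$ plays the role of an expert demonstration distribution in the sense of Ghosh et al. Adapting their argument with the diagonal Gaussian likelihood, and summing the per-step contributions over the $n$ virtual offsets, yields
\begin{equation*}
J^{p_m}(\pi) \;\ge\; n\gamma^n \,\mathbb{E}_{(s_t,a_t,g')\sim B_m}\!\bigl[\log \pi(a_t|s_t,g')\bigr] + \widetilde{C}_2,
\end{equation*}
where the $n\gamma^n$ prefactor tracks $n$ goal-reaching contributions each discounted by at most $\gamma^n$, and $\widetilde{C}_2$ absorbs the $-(1-\gamma)^{-1}$ baseline of the sparse reward and the log-normalization constant $C_0$. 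I would then bridge from $p_m$ back to $p$ via an MBPO-style simulation lemma: since the sparse reward is bounded in $[-1,0]$, one has $|J^p(\pi)-J^{p_m}(\pi)| \le C_1\epsilon_m$ for a $\pi$-independent $C_1$ that depends only on $\gamma$ and the rollout length $n$. Substituting the log-likelihood back in terms of $\mathcal{L}_{SL}$ delivers the stated bound, and because only $\mathcal{L}_{SL}$ depends on $\pi$ on the right-hand side, minimizing the loss is equivalent to maximizing this lower bound.

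The hard part will be the middle step: making the goal-reaching interpretation of $B_m$ rigorous when the data-collecting policy for the real segment $(s_t,a_t,s_{t+1})$ and the virtual-rollout policy $\pi$ differ, and extracting exactly the coefficient $n\gamma^n$ from the average-over-offsets argument without picking up an extra factor depending on how $j\in\{1,\dots,n\}$ is sampled in Algorithm~\ref{framwork}. I would also have to verify that the simulation-lemma constant $C_1$ is genuinely $\pi$-independent, which is why I lean on the range of the sparse reward rather than any value-function-based bound as the starting point for step three.
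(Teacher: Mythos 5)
Your proposal follows essentially the same route as the paper's proof: the paper likewise (i) converts the MSE loss to a Gaussian log-likelihood, (ii) lower-bounds the model-based return by the discounted goal-reaching objective ($J_m \geq \gamma^n \hat J_m$) and then applies GCSL's Theorem 4.1 under the learned dynamics to get the summed log-likelihood term (whence the $n\gamma^n$ coefficient), and (iii) bridges model-based and real returns with an MBPO-style finite-horizon simulation lemma in total variation, with the data-collecting-policy shift set to zero; your three steps are the same inequality chain presented in a different order. The "hard parts" you flag (the exact origin of $n\gamma^n$ and the behavior-policy vs.\ current-policy mismatch in $B_m$) are precisely the points the paper resolves via Lemma 3 plus the per-timestep sum in GCSL's bound, and by asserting $\epsilon_\pi=0$, respectively.
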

 The detailed proof is provided in Appendix \ref{ap:proof}.
 
%

Previous works update policy via either policy gradient \cite{andrychowicz2017hindsight,fang2019curriculum} or supervised learning \cite{ghosh2021learning} with the hindsight relabeled data introduced in Sec \ref{sec:HER}. In contrast to these works, we propose a joint loss combining policy gradient and the supervised loss $\mathcal{L}_{SL}$ for more efficient policy learning. Specifically, at each iteration during training, the policy is trained to minimize the following joint loss on the model-based relabeled data $B_m$:
\begin{equation}\label{eq:loss_joint}
\begin{aligned}
    \mathcal{L}_{joint} = & - \mathbb{E}_{(s_t, g') \sim B_m} \big[ Q(s_t,\pi(s_t, g'),g') \big] \\
    &+ \alpha \mathbb{E}_{(s_t,a_t,g') \sim B_m}\big[\|a_t - \pi(s_t,g')\|_2^2 \big],
\end{aligned}
\end{equation}
where $\alpha > 0$ is the weight balancing the expected return and the supervised loss $\mathcal{L}_{SL}$. The Q-function is updated to minimize the TD error on the model-based relabeled data $B_m$:
\begin{equation}
\label{eq:loss_value}
\mathcal{L}_{value}=E_{(s_t,a_t,g', r'_t,s_{t+1})\sim B_m} \big[(y_t-Q(s_t,a_t, g'))^2 \big],
\end{equation}
where $y_t = r'_t + \gamma Q (s_{t+1}, \pi (s_{t+1}, g'), g')$.

\subsection{Algorithm}
\label{sec:algorithm}
The overall framework of model-based hindsight experience replay (MHER) is presented in Algorithm \ref{framwork} and Figure \ref{fig:diagram} (b). First, the dynamics model $m$ is trained to minimize the loss $\mathcal{L}_{model}$ using Eq. \eqref{eq:dynamicloss} in both warm-up and training periods. For every episode, we sample a goal $g$ from the desired goal distribution and collect a trajectory to the replay buffer $B$ using current policy. After collecting data, we sample a minibatch $b$ from the replay buffer $B$. For each transition in the minibatch, we leverage the current policy $\pi$ to rollout a $n$-step trajectory with the learned dynamics model $m$ and perform model-based relabeling as explained in Section \ref{sec:MBR}. 
After MBR, the minibatch $b$ belongs to the model-based relabeled distribution $B_m$ and is sent for training the Q network and the policy network. The Q network is updated according to Eq. \eqref{eq:loss_value} and the policy is trained to minimize the joint loss $\mathcal{L}_{joint}$ in Eq. (\ref{eq:loss_joint}).


\section{Experiments}
We conduct experiments on both continuous Point2D and Mujoco environments and
compare the performance of MHER against a number of leading multi-goal RL algorithms for sparse reward environments. We also demonstrate the effectiveness of our goal-relabeling method by visualizing the distribution of relabeling goals. 


\subsection{Experimental Settings}
\label{sec:env_and_setting}

\paragraph{Environments}Our experimental environments consist of two Point2D environments, one Sawyer robots, and two Mujoco robots modified from OpenAI Gym \cite{brockman2016openai}. All of the five environments' states, actions, and goals are continuous. In the first two point-based environments the blue point aims to reach the green circle. The other four environments (FetchReach-v1, SawyerReachXYZEnv-v1, Reacher-v2) control a robot to reach a target position in the goal space. More detailed task description can be found in Appendix \ref{ap:task_description}.

\paragraph{Baseline Implementation}All the implementations of baseline algorithms are taken from their open-source code except GCSL, which we implement a deterministic version to fairly compare with other algorithms. For GCSL \cite{ghosh2021learning}, we only maintain a policy network and train it to minimize the loss $\mathcal{L}_{GCSL} = E_{(s_t,a_t,g') \sim B_h} \big[\|a_t - \pi(s_t,g')\|_2^2 \big]$, where $g'$ is relabeled using future achieved goals similar to HER and $B_h$ refers to the data distribution after hindsight relabeling.

\paragraph{Implementation of MHER}As for the implementation of MHER, actor and critic networks are both 3-layer fully connected networks with 256 units each layer. All the networks are updated with Adam optimizer, learning rate $1\times 10^{-3}$, and Polyak averaging coefficient of $0.9$. To encourage exploration, the probability of random action is set as $0.3$, and the scale of Gaussian noisy is $0.2$. Following \cite{andrychowicz2017hindsight,fang2019curriculum}, we perform model-based relabeling with a probability of $0.8$. Note that the portion of data without MBR is not sent for goal-conditioned supervised learning, which can be implemented with a mask.

\paragraph{Training Settings}We train all the algorithms for 30 epochs with one rollout worker. Each epoch contains $1$ (Point2DLargeEnv-v1, Point2D-FourRoom-v1), $5$ (FetchReach-v1, SawyerReachXYZEnv-v1), or $15$ (Reacher-v2) episodes according to the difficulty of the environments. Every episode contains $100$ interaction steps with the environment. At the end of each episode, we train all the algorithms for $5$ batches with a batch size of $64$. 

\paragraph{Evaluation Settings}After each epoch, we evaluate every algorithm for 100 episodes, and report the median test success rate and the interquartile range across $5$ random seeds. To train the dynamics model $m$ for MHER, we use a fully connected network with $4$ hidden layers and $256$ neurons each layer. In the warmup period, we train $m$ for $100$ updates with a batch size of $512$ and a learning rate of $0.001$. When training with MHER, we update $m$ with the sampled minibatch for $2$ times. More detailed hyperparameters are provided in Appendix \ref{ap:implementation_detail}.

\begin{figure*}[tb]
    \centering
    \includegraphics[width=1\linewidth]{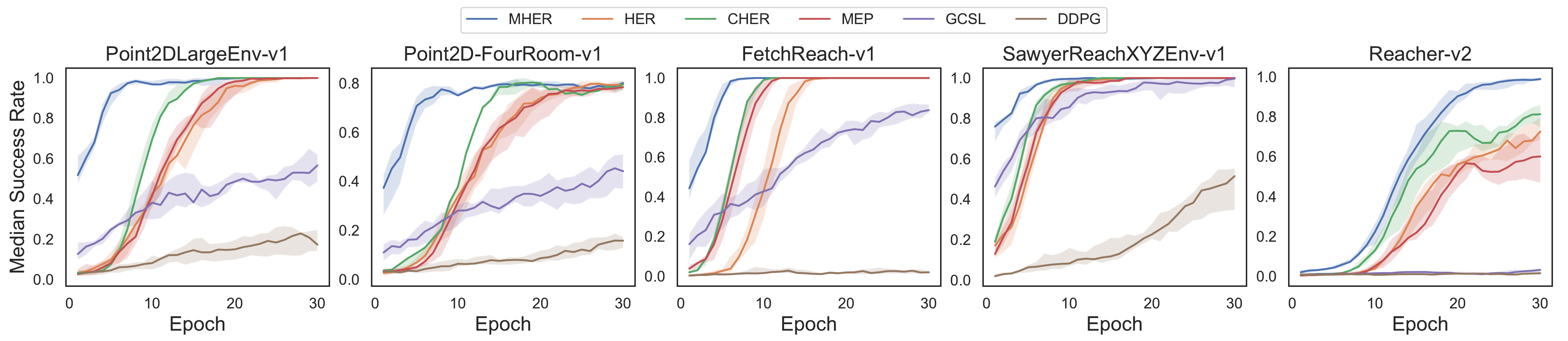}
    \caption{Median test success rate (line) with interquartile range (shaded area) on Point2D and Mujoco environments acorss 10 random seeds.}
    \label{fig:results}
\end{figure*}

\subsection{Benchmark Results}
We compare MHER with baselines such as DDPG, HER \cite{andrychowicz2017hindsight}, CHER \cite{fang2019curriculum}, MEP \cite{zhao2019maximum}, and GCSL \cite{ghosh2021learning} in five benchmark environments. For MHER, the model-based interaction step is set as $5$ and the parameter $\alpha$ in the joint loss is set as $3$. The median test success rates are reported in Figure \ref{fig:results}. It is apparent that MHER achieves significantly higher performance than those baselines with a much faster learning speed. The results also show that DDPG learns slowly in all the environments, while HER, CHER, MEP are three effective baselines. Besides, GCSL's performance in different environments is not consistent, e.g., in Reacher-v2 it is very close to DDPG but in SawyerReachXYZEnv-v1 it outperforms DDPG by a large margin. 

In Figure \ref{fig:merge} (a) and \ref{fig:merge} (b), we study how the parameters $\alpha$ and model-based interaction steps $n$ impact MHER's performance. The parameter $\alpha$ impacts the weight of the supervised policy loss in $\mathcal{L}_{joint}$ when optimizing the policy, and $\alpha = 0$ indicates learning without the supervised loss. The results in Figure \ref{fig:merge} (a) suggest that as $\alpha$ increases, performance increases when $\alpha \leq 3$ and decreases when $\alpha > 3$. As for the model-based interaction steps, more steps may contain more long-distance information but too many steps can lead to irrelevant goals. The results in Figure \ref{fig:merge} (b) verify our analysis and the horizon of $5$ achieves the best performance in the candidate set $\{0,1,3,5,7\}$. More experimental results are provided in Appendix \ref{ap:extra_experiment}.

\begin{figure}[t]
    \centering
    \includegraphics[width=1\linewidth]{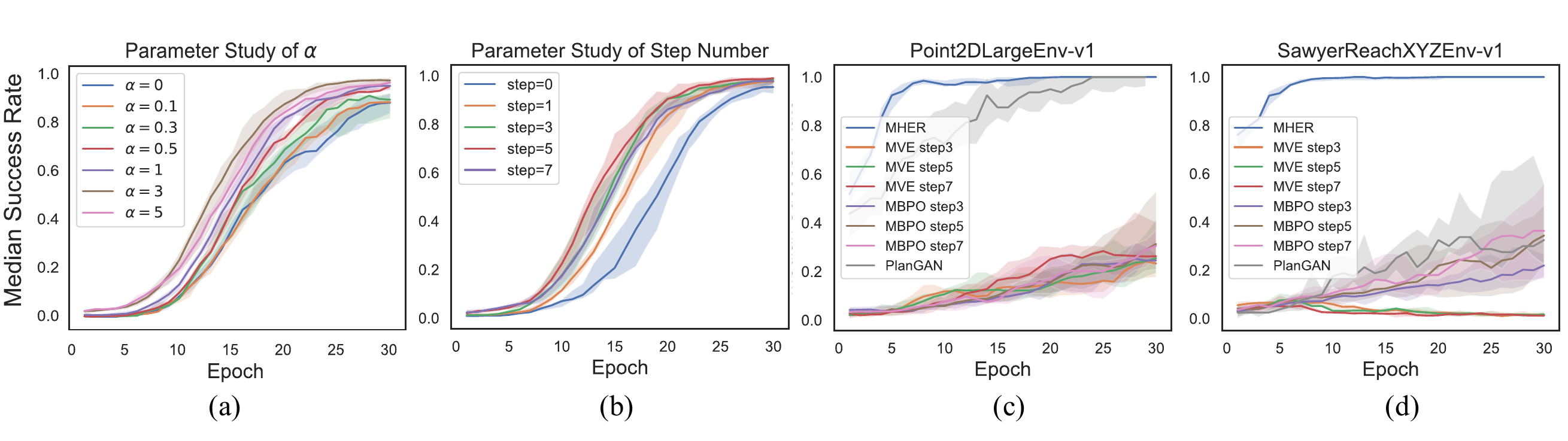}
    \caption{(a)(b) Parameter study of $\alpha$ and step number in Reacher-v2 environment. (c)(d) Comparison results with model-based based reinforcement learning baselines, MVE and MBPO, in Point2DLargeEnv-v1 and SawyerReachXYZEnv-v1 environments.}
    \label{fig:merge}
\end{figure}

\begin{figure}[t]
    \centering
    \includegraphics[width=1\linewidth, trim=10 0 0 0]{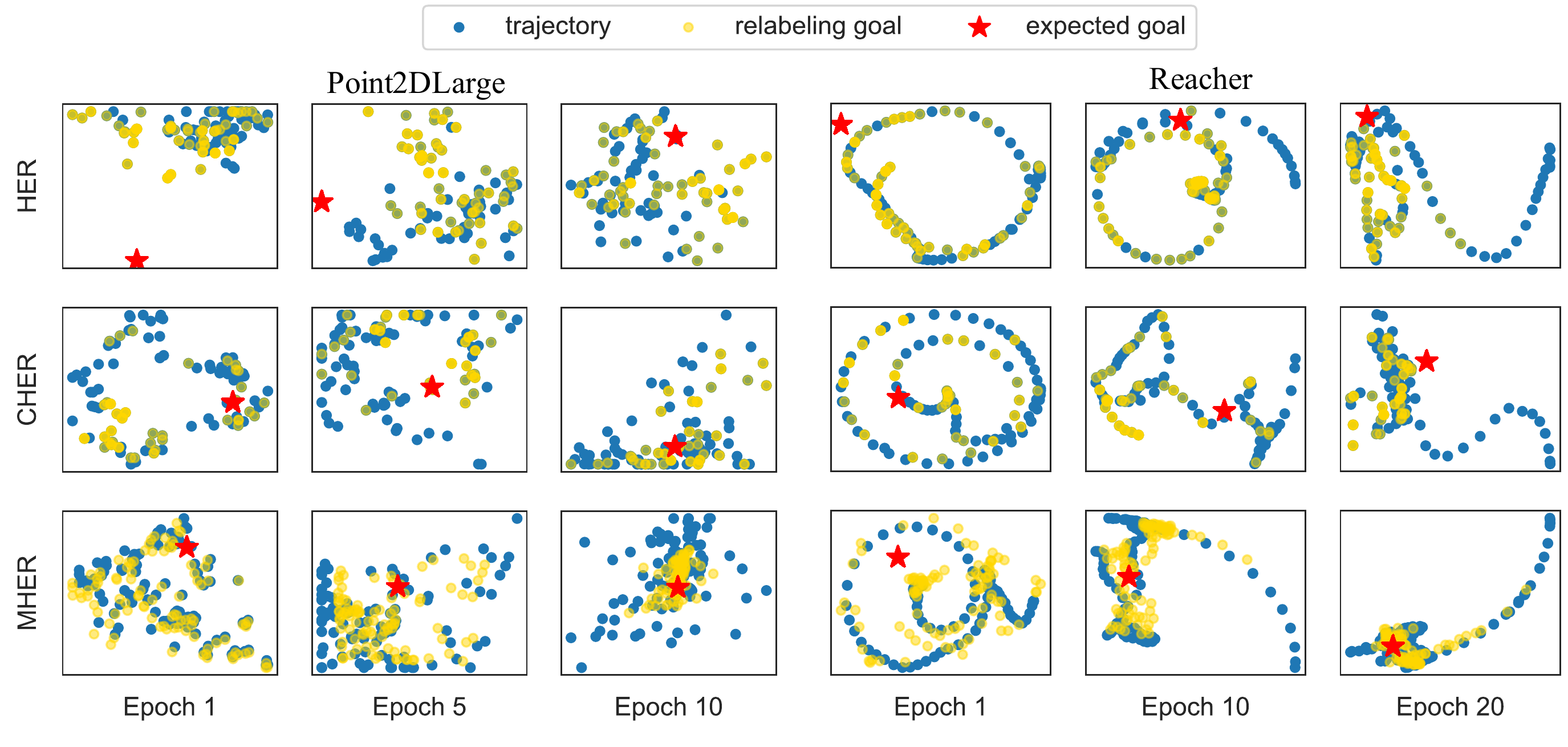}
    \caption{Relabeling goal distributions of HER, CHER, and MHER in Point2DLargeEnv-v1 and Reacher-v2. Blue points are real states in a trajectory, and red stars are their expected goals. Yellow points are relabeling goals for each transition in the trajectory.}
    \label{fig:goals}
\end{figure}

\subsection{Results of Model-based RL baselines}
We also make fair comparison with model-based RL baselines in Figure \ref{fig:merge} (c) and (d), including Model-based Value Expansion (MVE) \cite{feinberg2018model} and Model-based Policy Optimization (MBPO) \cite{DBLP:conf/nips/JannerFZL19}. Implementation details of MVE and MBPO are provided in Appendix \ref{ap:implementation_detail}. From the experimental results, we can conclude that previous model-based RL methods contribute little to multi-goal RL tasks with sparse rewards. In contrast to these algorithms, MHER can exploit environmental dynamics to learn policies more efficiently in the sparse reward setting. We also compare with PlanGAN \cite{DBLP:conf/nips/CharlesworthM20}, a planning method rather than a RL method, in Appendix \ref{ap:extra_experiment}. 
The results show that MHER can achieve comparable or higher performance compared with PlanGAN, although PlanGAN requires extremely large amount of computations (detailed analysis is provided in Appendix \ref{ap:comparison_with_model_based}) to simulate trajectories for selecting actions.

\subsection{Relationship with Curriculum Goal Relabeling}
\label{sec:curriculum_relabeling}
Prior work, Curriculum-guided HER (CHER) \cite{fang2019curriculum} introduces an explicit curriculum for automatically selecting relabeling goals. Nevertheless, the curriculum is hand-designed and needs to be adjusted through hyperparameters. 

In Figure \ref{fig:goals}, we compare the relabeling goals of HER, CHER, and MHER in Point2DLargeEnv-v1 and Reacher-v2 environments. We can observe that HER selects goals randomly along the future achieved goals, leading to more goals in the end of trajectories. Besides, CHER selects goals to balance the diversity of selected goals and the proximity to the expected goal. Moreover, HER and CHER only select real achieved goals, thus the set of sampling goals is limited. On the contrary, MHER generates virtual relabeling goals according to the dynamics model and the current policy under the original desired goal. In the early stage, relabeling goals of MHER are near achieved goals. As the policy improves, they gradually approach their expected goals. The learning process of MHER is automatically adjusted by the policy, therefore MHER can substantially achieve more efficient curriculum learning.


\begin{figure}[t]
    \centering
    \includegraphics[width=0.95\linewidth]{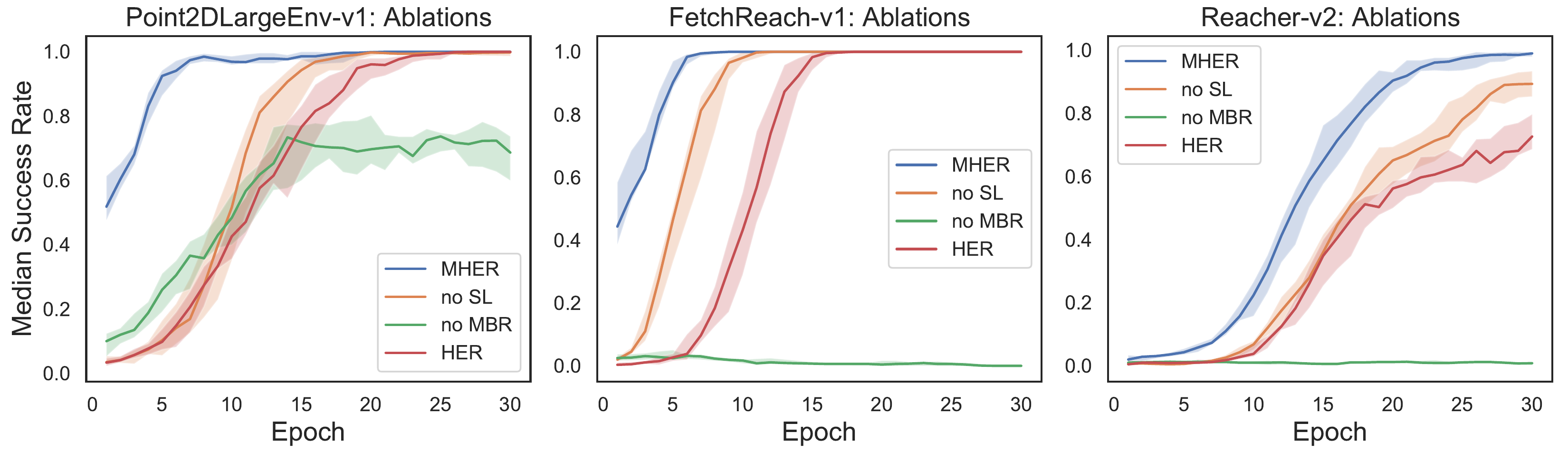}
    \caption{Ablation studies in Point2DLargeEnv-v1, FetchReach-v1, and Reacher-v2.}
    \label{fig:ablations}
\end{figure}

\subsection{Ablation Studies}
To analyze the importance of \emph{model-based relabeling} (MBR) and \emph{supervised learning on the MBR data} (denoted as SL) in the MHER framework, we design ablation experiments to compare MHER variants with HER. For MBR and SL, the number of model-based interaction steps is $5$ and $\alpha=3$ by default. We experiment with the following settings:
\begin{itemize}
    \item \emph{MHER}: DDPG+MBR+SL;
    \item \emph{no SL}: DDPG+MBR, which is equivalent to $\alpha=0$;
    \item \emph{no MBR}: DDPG+SL, adding an auxiliary task to DDPG that only uses model-based relabeled transitions to minimize the supervised policy loss. 
\end{itemize}

Empirical results in Figure \ref{fig:ablations} demonstrate that MBR is more important than SL in the MHER framework. DDPG+MBR learns faster than HER, but DDPG+SL learns very slowly in FetchReach-v1 and Reacher-v2, and cannot converge to $100\%$ success rate in Point2DLargeEnv-v1, which indicates that SL is less effective compared to MBR. By combining MBR and SL, the performance is significantly improved. The intuition behind the results is that MBR and SL achieve mutual improvement under the MHER framework, because MBR provides goals following an efficient curriculum to train the policy, and the supervised policy loss further improves the policy that guides the curriculum of MBR.

\section{Conclusion}
In this paper, we propose the framework of \emph{Model-based Hindsight Experience Replay} (MHER) to improve sample efficiency in multi-goal RL with sparse rewards. In MHER, we introduce virtual goals for goal relabeling and policy improvement, corresponding to model-based relabeling (MBR) and a joint loss combining policy gradient and supervised learning with MBR data. Experiments in a range of continuous multi-goal tasks demonstrate that MHER achieves significantly higher sample efficiency than previous goal-conditioned works such as HER, CHER, MEP and GCSL. Moreover, we show that MHER is efficient in the following aspects: (1) virtual goals generated from interaction with the trained model are not limited to real experiences; (2) generated goals follow an efficient curriculum guided by the policy; (3) policy learning takes advantage of both reinforcement learning and supervised improvement. In the future, we would like to work on more efficient works inspired by MHER.

\newpage

\appendix


\section{Theoretical Analysis}
\subsection{Proof of Theorem \ref{tm:lowerbound}}
\label{ap:proof}
\begin{proof}
Following GCSL \cite{ghosh2021learning}, we denote a real trajectory as $\tau = \{s_0,a_0,\ldots, s_T, a_T\}$. The expected goal is $g$, and the real dynamics is $p(s_{t+1}|s_t,a_t)$. We assume the initial state $s_0$ is uniformly distributed in the state space. To simplify the analysis, the relabeling goal is defined as the achieved goal of the last state $\mathcal{G}(\tau) \triangleq \phi(s_T)$, where $\phi$ is the state-to-goal mapping and $\mathcal{G}$ is the mapping from trajectories to relabeling goals. This simplification is also adopted in \cite{ghosh2021learning}. Then we denote a virtual trajectory generated by the trained dynamics model $p_m(s'_{t+1}|s'_t,a'_t)$ as $\tau_m = \{s'_0, a'_0, \ldots, s'_n, a'_n\}, s'_0=s_0$, and suppose $T\geq n$. The model-based relabeling goal is defined as $\mathcal{G}(\tau_m)\triangleq \phi(s'_n)$. We utilize a sparse reward function: $r(s_t,a_t,g) = 1[\phi(s_t) = g]$, therefore the reward at each timestep is bounded $r_t \in[0,1]$. Other variables are defined in Section \ref{sec:preliminary}.

In our setting, the original multi-goal RL objective is:
\begin{equation*}
    J(\pi)=E_{g\sim p(g),a_t \sim \pi, s_{t+1}\sim p(\cdot|s_t,a_t)}\big[\sum^{T}_{t=0} \gamma^t r(s_t, a_t, g) \big] .
\end{equation*}
Similarly, the expected return of the virtual trajectory $\tau_m$ is:
\begin{equation*}
    J_m(\pi)=E_{g\sim p(g),a_t\sim\pi, s_{t+1}\sim p_m(\cdot|s_t,a_t)}[\sum_{t=0}^{n} \gamma^tr(s_t,a_t,g)] .
\end{equation*}
Following MBPO \cite{DBLP:conf/nips/JannerFZL19}, we first bound model error using $\epsilon_m$.
As our model-based approach collects data with current policy $\pi$, the policy shift $\epsilon_{\pi}=0$. Next, we apply Lemma \ref{lemma:J} to bound $J(\pi)$ with $J_m(\pi)$ and absorb constants into $C_1$ as:
\begin{equation*}
    J(\pi) \geq J_m(\pi) + C_1 \epsilon_m ,
\end{equation*}
where $C_1=-\frac{2\gamma r_{max}}{(1-\gamma)^2}=-\frac{2\gamma}{(1-\gamma)^2}$ is derived from Lemma \ref{lemma:J}.

In GCSL, agents optimize the goal-reaching objective, i.e., maximizing the probability of reaching $g$ at the last step:
\begin{equation*}
    \hat J(\pi) = E_{g\sim p(g), a_t \sim \pi, s_{t+1} \sim p(\cdot|s_t,a_t)}[1|\mathcal{G}(\tau)=g].
\end{equation*}
With Lemma \ref{lemma1}, we can also obtain the goal-reaching return bound for model-based return $J_m(\pi)$:
\begin{equation*}
\begin{aligned}
        J_m(\pi) \geq \gamma^n E_{g\sim p(g),a_t \sim \pi, s_{t+1}\sim p_m(\cdot|s_t,a_t)}\big[1[\mathcal{G}(\tau_m) = g] \big] ,
\end{aligned}
\end{equation*}
and denote the inequality as $J_m(\pi) \geq \gamma^n \hat J_m(\pi)$. 
For $\hat J_m(\pi)$, we follow Theorem 4.1 in \cite{ghosh2021learning} and have:
\begin{equation*}
\begin{aligned}
    \hat J_m(\pi) &\geq E_{a_t \sim \pi, s_{t+1} \sim p_m(\cdot|s_t,a_t)} \big[ \sum_{t=0}^n log\pi(a_t|s_t,\mathcal{G}(\tau_m))  \big] + C_2
\end{aligned}
\end{equation*}
As we collect data using the current policy $\pi$, we don't have the policy variation term. The right side of this inequality means using relabeled data for maximum likelihood estimation, therefore the policy $\pi$ is required to be stochastic and select actions with non-zero probability. 

Combining the intermediate results, we can conclude that:
\begin{equation*}
\begin{aligned}
    J(\pi) &\geq J_m(\pi) + C_1 \epsilon_m \geq \gamma^n \hat J_m(\pi) +  C_1 \epsilon_m \\
    & \geq \gamma^n E_{a_t \sim \pi, s_{t+1} \sim p_m(\cdot|s_t,a_t)} \big[ \sum_{t=0}^n log\pi(a_t|s_t,\mathcal{G}(\tau_m)) \big] + C_1 \epsilon_m + C_2
\end{aligned}
\end{equation*}
Then, we take the Diagonal Gaussian policy with mean vector $\pi(s_t,g)$ of dimension $|a|$ and non-zero positive constant variance $\sigma^2$:
\begin{equation*}
    P(a_t|s_t,g)=\frac{1}{(\sigma \sqrt{2Pi})^{|a|}} e^{\frac{-\|a_t-\pi(s_t,g)\|_2^2}{2\sigma^2}} .
\end{equation*}
Note that we use $Pi$ to represent 'pi' in the Gaussian distribution and to distinguish from the policy $\pi$. $Pi$ is a constant and we will absorb it in $C_2$ in the following derivation.
Denoting the model-based relabeled data distribution as $B_m$ and relabeled goal as $g'$, we have:
\begin{equation*}
    \begin{aligned}
    J(\pi) &\geq -\frac{\gamma^n}{2\sigma^2} E_{(a_t,s_t,g')^n_{t=0} \sim B_m} \big[ \sum_{t=0}^n \|a_t -\pi(s_t,g')\|_2^2 \big] + C_1 \epsilon_m + C_2 \\
    &= -\frac{n\gamma^n}{2\sigma^2} E_{(a_t,s_t,g') \sim B_m} \|a_t -\pi(s_t,g')\|_2^2  + C_1 \epsilon_m + C_2
    \end{aligned}
\end{equation*}
This completes the proof.
\end{proof}

\subsection{Useful Lemmas}
\begin{lemma}[Finite Horizon Return Bound]
\label{lemma:finite_return_bound}
Suppose the model error is bounded as $max_t E_{s\sim p_1^t(s)} D_{TV}(p_1(s'|s,a)\| p_2(s'|s,a))\leq \epsilon_m$, and $max_s D_{TV}(\pi_1(a|s)\| \pi_2(a|s))\leq \epsilon_\pi$. And We denote $J^{(T)}(\pi)$ as expected return of finite horizon $T$ under policy $\pi$, and $r_{max}>0$ is the maximum reward. Then the difference of finite horizon returns are bounded as:
\begin{equation*}
    |J^{(T)} (\pi_1) - J^{(T)} (\pi_2)| \leq 2r_{max} \sum_{t=0}^T \gamma^t [t(\epsilon_m + \epsilon_{\pi}) + \epsilon_{\pi}]
\end{equation*}
\end{lemma}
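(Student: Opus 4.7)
The plan is to reduce the return difference to a sum of per-timestep reward expectation differences, bound each of these by the total variation distance between the two joint state--action distributions, and then control those distances by induction on $t$.

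First I would write
$$J^{(T)}(\pi_i) = \sum_{t=0}^T \gamma^t\, \mathbb{E}_{(s_t,a_t)\sim p_i^t}[r(s_t,a_t)],$$
where $p_i^t(s,a)$ denotes the joint state--action distribution at time $t$ induced by policy $\pi_i$ and transition kernel $p_i$. Applying the triangle inequality across timesteps together with the standard coupling bound $|\mathbb{E}_{p_1}[f] - \mathbb{E}_{p_2}[f]| \leq 2\|f\|_\infty D_{TV}(p_1,p_2)$, and using $\|r\|_\infty \leq r_{\max}$, yields
$$|J^{(T)}(\pi_1) - J^{(T)}(\pi_2)| \leq 2 r_{\max} \sum_{t=0}^T \gamma^t\, D_{TV}(p_1^t, p_2^t).$$
Matching this against the right-hand side of the lemma reduces everything to proving the per-timestep estimate $D_{TV}(p_1^t, p_2^t) \leq t(\epsilon_m + \epsilon_\pi) + \epsilon_\pi$.

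I would establish this estimate by induction on $t$. For the base case $t=0$ the two systems share the same initial state distribution, so only a single application of the differing policies contributes and $D_{TV}(p_1^0, p_2^0) \leq \epsilon_\pi$ follows directly from the policy-shift assumption. For the inductive step, factorize
$$p_i^{t+1}(s',a') = \pi_i(a'|s')\sum_{s,a} p_i(s'|s,a)\, p_i^t(s,a),$$
and add and subtract two intermediate distributions: one that uses $\pi_1$ and $p_1$ for the first $t$ steps and then switches to $p_2$ for the $(t{+}1)$-th transition, and another that additionally swaps $\pi_1$ for $\pi_2$ at step $t{+}1$. Decomposing $D_{TV}(p_1^{t+1},p_2^{t+1})$ along these two intermediates splits the distance into (i) a previous-step piece bounded by the inductive hypothesis, (ii) a model-shift piece bounded by $\epsilon_m$ via the assumption (in expectation under $p_1^t(s)$), and (iii) a policy-shift piece bounded by $\epsilon_\pi$. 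This gives the recursion $D_{TV}(p_1^{t+1},p_2^{t+1}) \leq D_{TV}(p_1^t,p_2^t) + \epsilon_m + \epsilon_\pi$, which unrolls to the claimed linear-in-$t$ bound and, combined with the first display, yields the lemma.

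The main obstacle is arranging the inductive decomposition so that the reference marginal matches the hypothesis: the model TV is bounded \emph{in expectation under} $p_1^t(s)$, so the add-and-subtract must route through the intermediate whose first $t$ steps use $\pi_1,p_1$, not $\pi_2,p_2$; otherwise the model-gap term would be evaluated under the wrong marginal and the bound would not close. Once the ordering of intermediates is fixed, the remaining manipulations are routine applications of the triangle inequality and the chain-rule TV bound $D_{TV}(q_1\otimes k_1,\, q_2\otimes k_2) \leq D_{TV}(q_1,q_2) + \mathbb{E}_{q_1}[D_{TV}(k_1(\cdot|\cdot), k_2(\cdot|\cdot))]$ applied twice (once for the transition, once for the policy).
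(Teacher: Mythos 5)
Your proof is correct and follows essentially the same route as the paper: both reduce the return gap to $2 r_{\max}\sum_{t=0}^T \gamma^t D_{TV}\bigl(p_1^t(s,a)\,\|\,p_2^t(s,a)\bigr)$ and then use the linear-in-$t$ growth of this total variation. The only difference is that you establish $D_{TV}(p_1^t,p_2^t)\leq t(\epsilon_m+\epsilon_\pi)+\epsilon_\pi$ by an explicit induction on the joint state--action distributions, whereas the paper splits the joint TV into a state-marginal term plus $\epsilon_\pi$ and imports the marginal bound $t(\epsilon_m+\epsilon_\pi)$ from Lemma B.3 of MBPO; your argument is simply the self-contained version of that citation.
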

\begin{proof}
    This lemma can be derived by Lemma B.3 in \cite{DBLP:conf/nips/JannerFZL19}, we also provide a sketch proof.
    \begin{equation*}
    \begin{aligned}
        |J^{(T)} (\pi_1) - J^{(T)} (\pi_2)| &= |\sum_{s,a}\sum_{t} \gamma^t (p_1^t(s,a) - p_2^t(s,a))r(s,a)| \\
        &\leq r_{max} \sum_{t}\sum_{s,a} \gamma^t |p_1^t(s,a) - p^t_2(s,a)| \\
        &=2r_{max} \sum_{t}\gamma^t D_{TV}(p^t_1(s,a)\|p^t_2(s,a)) \\ 
        &\leq 2r_{max} \sum_{t}\gamma^t[D_{TV}(p^t_1(s)\|p^t_2(s)) + max_{s}D_{TV}(\pi_1(a|s)\|\pi_2(a|s))] \\
        &\leq 2r_{max} \sum_{t=0}^T \gamma^t [t(\epsilon_m + \epsilon_{\pi}) + \epsilon_{\pi}]
    \end{aligned}
    \end{equation*}
where $D_{TV}(p^t_1(s)\|p^t_2(s)) \leq t(\epsilon_m + \epsilon_{\pi})$ is also prove in Lemma B.3 in \cite{DBLP:conf/nips/JannerFZL19}. The branched returns bound (Lemma B.3 in \cite{DBLP:conf/nips/JannerFZL19}) is a special case of our Lemma \ref{lemma:finite_return_bound} when horizon $T \to \infty$.
\end{proof}

\begin{lemma}
\label{lemma:J}
    Let $J^{(\infty)}(\pi)$ denote $J(\pi)$ with infinite horizon, where the superscript refers to the horizon. $J(\pi)=J^{(T)}(\pi), J_m(\pi)=J_m^{(n)}(\pi), T\geq n$. Given model error $\epsilon_m$ and policy shift $\epsilon_{\pi}$ defined in \cite{DBLP:conf/nips/JannerFZL19}, non-negative reward function $r(s_t,a_t) \geq 0$, and the maximum reward $r_{max}$, $J(\pi)$ can be bounded as:
    \begin{equation*}
        J(\pi) \geq J_m(\pi) - \frac{2r_{max}}{(1-\gamma)^2}( \gamma \epsilon_m + 2\epsilon_{\pi} ),
    \end{equation*}
\end{lemma}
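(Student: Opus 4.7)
The plan is to prove the bound by decomposing $J(\pi)$ at step $n$, using non-negativity of rewards to discard the tail, and then invoking Lemma~\ref{lemma:finite_return_bound} on the common horizon-$n$ prefix. Concretely, I would first write
\begin{equation*}
J(\pi) \;=\; J^{(T)}(\pi) \;=\; J^{(n)}(\pi) \;+\; \mathbb{E}\!\left[\sum_{t=n+1}^{T} \gamma^{t} r(s_t,a_t)\right],
\end{equation*}
and since $r(s_t,a_t)\geq 0$ and $\gamma\in[0,1)$, the tail term is non-negative, yielding $J(\pi)\geq J^{(n)}(\pi)$. This step is what lets the tail beyond horizon $n$ (which is absent from $J_m(\pi)$) be dropped for free, and it is the only place the non-negative-reward hypothesis is used.

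Next, I would compare $J^{(n)}(\pi)$ with $J_m(\pi)=J_m^{(n)}(\pi)$ on the same horizon. Both use the same policy $\pi$, but one uses the real transition kernel $p$ and the other uses the learned kernel $p_m$, with discrepancy captured by $\epsilon_m$ and $\epsilon_\pi$. Applying Lemma~\ref{lemma:finite_return_bound} gives
\begin{equation*}
J^{(n)}(\pi) \;\geq\; J_m^{(n)}(\pi) \;-\; 2r_{\max}\sum_{t=0}^{n}\gamma^{t}\bigl[t(\epsilon_m+\epsilon_\pi)+\epsilon_\pi\bigr].
\end{equation*}
I would then upper-bound the finite sum on the right by extending it to $t=\infty$ (valid since every summand is non-negative), and evaluate the two standard series $\sum_{t\geq 0}\gamma^{t}=\tfrac{1}{1-\gamma}$ and $\sum_{t\geq 0}t\gamma^{t}=\tfrac{\gamma}{(1-\gamma)^{2}}$. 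Chaining the two inequalities produces a bound of the form $J(\pi)\geq J_m(\pi) - \tfrac{2r_{\max}}{(1-\gamma)^{2}}\bigl(\gamma\epsilon_m + c\,\epsilon_\pi\bigr)$ for some constant $c$, which matches the stated form up to the constant in front of $\epsilon_\pi$.

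The main obstacle I anticipate is matching the exact coefficient of $\epsilon_\pi$: the raw computation from the series gives a $\gamma\epsilon_m+\epsilon_\pi$ numerator, whereas the lemma asserts $\gamma\epsilon_m+2\epsilon_\pi$. This is a loose-but-clean constant that can be recovered by bounding $\sum_t\gamma^t[t(\epsilon_m+\epsilon_\pi)+\epsilon_\pi]$ slightly more generously (e.g., absorbing the $\epsilon_\pi$ contribution using $\tfrac{1}{1-\gamma}\leq \tfrac{1}{(1-\gamma)^{2}}$ and grouping it with an extra $\epsilon_\pi$ term), so the presented bound is simply a convenient overestimate rather than the tightest one. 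No new ideas beyond Lemma~\ref{lemma:finite_return_bound}, non-negativity of rewards, and summing two geometric series are required.
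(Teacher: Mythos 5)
Your overall skeleton (discard the tail using non-negative rewards, control the horizon-$n$ gap via Lemma~\ref{lemma:finite_return_bound}, then sum two geometric series) parallels the paper's, but there is a genuine gap at the step where you apply Lemma~\ref{lemma:finite_return_bound} directly to the pair $J^{(n)}(\pi)$ versus $J_m^{(n)}(\pi)$. In that comparison both processes run the \emph{same} policy $\pi$, so the policy-shift term of Lemma~\ref{lemma:finite_return_bound} is zero, and the hypothesis you would actually need is a bound on the model error measured under $\pi$'s own state marginals, $\max_t E_{s\sim p^t_{\pi}(s)}D_{TV}\big(p(s'|s,a)\,\|\,p_m(s'|s,a)\big)$. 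But $\epsilon_m$ in this lemma (and in Theorem~\ref{tm:lowerbound}) is the MBPO-style quantity measured under the \emph{data-collecting} policy $\pi_D$'s state distribution, so your displayed inequality with integrand $t(\epsilon_m+\epsilon_\pi)+\epsilon_\pi$ is not licensed by Lemma~\ref{lemma:finite_return_bound}; you never introduce $\pi_D$, and ``discrepancy captured by $\epsilon_m$ and $\epsilon_\pi$'' is exactly the claim requiring proof. The paper supplies the missing idea: following Theorem A.1 of \cite{DBLP:conf/nips/JannerFZL19}, it telescopes through $\pi_D$, writing $J^{(T)}(\pi)-J^{(T)}_m(\pi)=\big[J^{(T)}(\pi)-J^{(T)}(\pi_D)\big]+\big[J^{(T)}(\pi_D)-J^{(T)}_m(\pi)\big]$, and applies Lemma~\ref{lemma:finite_return_bound} separately to each piece (the first has no model error; in the second the first process's marginals are generated by $\pi_D$, matching the definition of $\epsilon_m$), and only then sums the series.

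This also shows that your reading of the coefficient $2\epsilon_\pi$ as a ``loose-but-clean'' relaxation of a tighter $\gamma\epsilon_m+\epsilon_\pi$ bound is backwards: the tighter bound is precisely what your one-shot application would give and it is not established; the factor $2$ arises structurally because $\epsilon_\pi$ enters once in each branch of the $\pi_D$ decomposition, giving the summed integrand $t(\epsilon_m+2\epsilon_\pi)+2\epsilon_\pi$ whose series evaluates to $\frac{2r_{max}}{(1-\gamma)^2}(\gamma\epsilon_m+2\epsilon_\pi)$. Your tail-dropping step ($r\geq 0$ lets you discard rewards beyond step $n$) is fine and mirrors the paper's use of $J_m^{(T)}\geq J_m^{(n)}$; if you replace the middle step with the $\pi_D$ decomposition, the rest of your argument goes through.
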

\begin{proof}
    Let $\pi_D$ denote the data collecting policy, we follow the proof of Theorem A.1 in \cite{DBLP:conf/nips/JannerFZL19} and have:
    \begin{equation*}
        J^{(T)}(\pi) - J^{(T)}_m(\pi) = \underbrace{J^{(T)}(\pi) - J^{(T)}(\pi_D)}_{L1} + \underbrace{J^{(T)}(\pi_D) - J^{(T)}_m(\pi)}_{L2}
    \end{equation*}
    For $L_1$ and $L_2$, we apply Lemma \ref{lemma:finite_return_bound} to derive the following bound (note that there should be no model error in $L_1$ term): 
    \begin{equation*}
    \begin{aligned}
        J^{(T)}(\pi) - J^{(T)}_m(\pi)
        &\geq   \underbrace{-2r_{max}\sum_{t=0}^{T} \gamma^t\big[ t\epsilon_{\pi} + \epsilon_{\pi} \big]}_{L_1} \underbrace{- 2r_{max}\sum_{t=0}^{T} \gamma^t\big[ t(\epsilon_m +\epsilon_{\pi}) + \epsilon_{\pi} \big]}_{L_2} \\
         &= - 2r_{max}\sum_{t=0}^{T} \gamma^t\big[ t(\epsilon_m +2\epsilon_{\pi}) + 2\epsilon_{\pi} \big],
    \end{aligned}
    \end{equation*}
    To verify the above inequality, let $T\rightarrow\infty$ and leverage the properties $\sum_{t=0}^{\infty}\gamma^t t \leq \frac{\gamma}{(1-\gamma)^2}$ and $\sum_{t=0}^{\infty}\gamma^t \leq \frac{1}{1-\gamma}$, then we can get MBPO performance bound (Theorem A.1 in \cite{DBLP:conf/nips/JannerFZL19}). Utilizing the assumptions $r_t\in[0,1], T\geq n$ and properties $\epsilon_m \geq 0, \epsilon_{\pi} \geq 0, J_m^{(T)}\geq J_m^{(n)}$, we can then prove the lower bound
    \begin{equation*}
    \begin{aligned}
    J^{(T)}(\pi) &\geq J^{(T)}_m(\pi) - 2r_{max}\sum_{t=0}^{T} \gamma^t\big[t(\epsilon_m +2\epsilon_{\pi}) + 2\epsilon_{\pi} \big] \\
    &\geq J^{(T)}_m(\pi) - 2r_{max}\sum_{t=0}^{\infty} \gamma^t\big[ t(\epsilon_m +2\epsilon_{\pi}) + 2\epsilon_{\pi} \big] \\
    &\geq J^{(n)}_m(\pi) - \frac{2 r_{max}[\gamma\epsilon_m + 2 \gamma \epsilon_{\pi}+2\epsilon_{\pi}(1-\gamma)]}{(1-\gamma)^2} \\
    &\geq J_m^{(n)}(\pi) - \frac{2r_{max}}{(1-\gamma)^2} ( \gamma \epsilon_m + 2\epsilon_{\pi}  ).
    \end{aligned}
    \end{equation*}
    Alternating $J_m^{(n)}(\pi), J^{(T)}(\pi)$ with $J_m(\pi), J(\pi)$ completes the proof.
\end{proof}

\begin{lemma}[Goal-reaching Return Bound]
\label{lemma1}
    Given trajectories $\tau = \{s_0,a_0,\ldots, s_T, a_T\}$ and reward function $r(s_t,a_t,g) = 1[\phi(s_t) = g]$, the goal-reaching objective $\gamma^T\hat J(\pi)$ is a lower bound of the original multi-goal objective $J(\pi)$.
\end{lemma}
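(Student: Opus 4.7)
The plan is to prove $J(\pi) \geq \gamma^T \hat J(\pi)$ by a simple termwise lower bound on the discounted return, exploiting the fact that the sparse reward $r(s_t,a_t,g) = 1[\phi(s_t)=g]$ is non-negative at every timestep.

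First I would recall the definitions: $J(\pi)$ is the expectation of $\sum_{t=0}^T \gamma^t r(s_t,a_t,g)$ and $\hat J(\pi)$ is the expectation of $1[\mathcal{G}(\tau)=g] = 1[\phi(s_T)=g]$, both taken under the same joint distribution $g\sim p(g)$, $a_t\sim\pi$, $s_{t+1}\sim p(\cdot|s_t,a_t)$. Since $1[\phi(s_t)=g]\geq 0$ for every $t$ and every sample path, I can drop all terms with $t<T$ from the sum to obtain the pointwise inequality
\begin{equation*}
\sum_{t=0}^T \gamma^t r(s_t,a_t,g) \;\geq\; \gamma^T r(s_T,a_T,g) \;=\; \gamma^T\,1[\phi(s_T)=g] \;=\; \gamma^T\,1[\mathcal{G}(\tau)=g].
\end{equation*}
Taking expectations of both sides under the common distribution and pulling the deterministic factor $\gamma^T$ outside yields $J(\pi) \geq \gamma^T \hat J(\pi)$, which is the desired bound.

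There is no real obstacle here: the only subtle point is that the two objectives must be expressed over the same expectation so that monotonicity of expectation applies, and that the reward function is non-negative (so discarding earlier terms only weakens the bound). Both facts follow immediately from the definitions given in the excerpt. I would close by noting that this lemma is exactly what is invoked in the proof of Theorem~\ref{tm:lowerbound} to pass from the model-based return $J_m(\pi)$ to the goal-reaching quantity $\hat J_m(\pi)$ with the factor $\gamma^n$.
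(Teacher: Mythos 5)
Your proof is correct and follows exactly the same route as the paper: since the sparse reward is non-negative, the discounted sum is bounded below pointwise by its final term $\gamma^T\,1[\phi(s_T)=g]=\gamma^T\,1[\mathcal{G}(\tau)=g]$, and taking expectations under the common distribution gives $J(\pi)\geq\gamma^T\hat J(\pi)$. No gaps; this matches the paper's argument verbatim in substance.
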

\begin{proof}
    Leveraging the definition of $J(\pi)$, $\hat J(\pi)$, and the reward function $r(s_t,a_t,g) = 1[\phi(s_t) = g]$, we have:
    \begin{equation*}
    \begin{aligned}
    J(\pi)&=E_{g\sim p(g),a_t \sim \pi, s_{t+1}\sim p(\cdot|s_t,a_t)}\big[\sum^{T}_{t=0} \gamma^t r(s_t, a_t, g) \big] \\
    & \geq E_{g\sim p(g),a_t \sim \pi, s_{t+1}\sim p(\cdot|s_t,a_t)}\big[\gamma^T r(s_T, a_T, g) \big] \\
    & = \gamma^T E_{g\sim p(g),a_t \sim \pi, s_{t+1}\sim p(\cdot|s_t,a_t)}\big[1[\mathcal{G}(\tau) = g] \big] \\
    & = \gamma^T \hat J(\pi).
    \end{aligned}
    \end{equation*}
\end{proof}

\subsection{Theoretical Analysis of Virtual Achieved Goals}
\label{ap:theorem_goals}

We provide formal analysis on the expected distance between virtual achieved goals and original desired goals. First, we define the \emph{expected distance} ($ED_{\pi}(g)$) as the sum of the product of distance and the discounted visitation frequencies:
\begin{equation*}
    ED_{\pi}(g) = \sum_s \rho_\pi(s) \|\phi(s)-g\|_2^2
\end{equation*}
where g refer to original desired goals, $\phi$ is a state-to-goal mapping, and $\rho_\pi(s)=P(s_0=s)+\gamma P(s_1 = s|\pi)+\gamma^2 P(s_2 = s|\pi)+..., s_0\sim p(s_0)$ and actions are sampled according to $\pi$. The expected distance $ED_{\pi}(g)$ measure the proximity of trajectories generated by the policy $\pi$ and the desired goal $g$. The following theory demonstrates the expected distance in the model-based relabeling can be minimized by optimizing the policy.

\begin{theorem}[Expected Distance Bound, Infinite Horizon]
\label{tm:expected_distance}
    Suppose the state space is finite, the policy $\pi$ is optimized to maximize $J(\pi)$, the reward function is sparse $r(s,a,g)=1[\|\phi(s)-g\| \leq \epsilon]$, $\epsilon$ is a small positive threshold, the goal space is bounded, i.e., $\|g-g'\|_2^2 \leq C_g, \forall g,g' \in \mathcal{G}, C_g \geq 0$. Denote the environmental dynamics as $p(s'|s,a)$, the model error $\epsilon_m$, the learned dynamics model as $p_m(s'|s,a)$, and the discounted visitation frequencies with the dynamics model as $\rho_{\pi,m}(s)$. Then, \textbf{optimizing $J(\pi)$ is equivalent to minimizing the expected distance between virtual achieved goals and original desired goals}, i.e., $ED_{\pi,m}(g)=\sum_s \rho_{\pi, m}(s) \|\phi(s)-g\|_2^2$. Specifically, the following inequality holds:
    \begin{equation*}
        ED_{\pi,m}(g) \leq -C_g \cdot J(\pi) +  \frac{C_g+\epsilon}{1-\gamma} + \frac{2\gamma \epsilon_m C_g}{(1-\gamma)^2}
    \end{equation*}
\end{theorem}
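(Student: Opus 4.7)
The strategy is to first bound the per-state squared distance $\|\phi(s) - g\|_2^2$ pointwise by a linear function of the sparse reward indicator $r(s,a,g)$, then integrate against the model-based discounted occupancy $\rho_{\pi,m}$ to relate $ED_{\pi,m}(g)$ to the model-based return, and finally transfer the resulting quantity back to the real-dynamics return $J(\pi)$ using the model-error machinery already developed in Lemma \ref{lemma:finite_return_bound}.

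Concretely, I would first split states according to whether the sparse reward fires. On the event $\{r(s,g) = 1\}$ we have $\|\phi(s) - g\| \leq \epsilon$ and therefore $\|\phi(s) - g\|_2^2 \leq \epsilon$ (absorbing $\epsilon^2 \leq \epsilon$ under the small-threshold convention implicit in the statement); on the complement, the goal-space diameter bound gives $\|\phi(s) - g\|_2^2 \leq C_g$. This yields the pointwise inequality
\begin{equation*}
\|\phi(s) - g\|_2^2 \leq \epsilon \cdot r(s,g) + C_g \cdot (1 - r(s,g)).
\end{equation*}
Multiplying by $\rho_{\pi,m}(s)$, summing over $s$, using the normalization $\sum_s \rho_{\pi,m}(s) = \frac{1}{1-\gamma}$, and writing $V_m^\pi(g) = \sum_s \rho_{\pi,m}(s) r(s,g)$ for the return under the learned dynamics, I obtain
\begin{equation*}
ED_{\pi,m}(g) \leq \frac{C_g}{1-\gamma} - (C_g - \epsilon)\, V_m^\pi(g).
\end{equation*}

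Next, since the model-based rollouts are driven by the same policy $\pi$ that collected the real data, the policy shift $\epsilon_\pi$ in Lemma \ref{lemma:finite_return_bound} is zero and $r_{\max} = 1$. Taking $T \to \infty$ and using $\sum_{t \geq 0} \gamma^t t = \frac{\gamma}{(1-\gamma)^2}$ yields the model-vs-real gap
\begin{equation*}
V_m^\pi(g) \geq V^\pi(g) - \frac{2\gamma\,\epsilon_m}{(1-\gamma)^2}.
\end{equation*}
Substituting this into the previous display, identifying $V^\pi(g)$ with $J(\pi)$ (either per-goal, or after taking expectations over $g \sim p(g)$ on both sides), and bounding the residual $\epsilon \cdot V^\pi(g) \leq \frac{\epsilon}{1-\gamma}$ to reshape the coefficient of the return from $-(C_g - \epsilon)$ into $-C_g$, then gives the target inequality.

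The main obstacle is cosmetic constant-chasing: forcing the coefficient of $J(\pi)$ to be exactly $-C_g$ rather than $-(C_g - \epsilon)$ is what pays for the extra $\frac{\epsilon}{1-\gamma}$ term, and one must also absorb the $\epsilon$ vs.\ $\epsilon^2$ mismatch either by assuming $\epsilon \leq 1$ or by reading the stated $\epsilon$ as a loose upper bound on the reached-set squared distance. A more conceptual subtlety is the notational conflation of $J(\pi) = E_{g \sim p(g)}[V^\pi(g)]$ with the single-goal value $V^\pi(g)$ that appears naturally in the argument; the cleanest resolution is to prove the inequality conditionally on each fixed $g$ and, if desired, integrate both sides against $p(g)$ only at the very end.
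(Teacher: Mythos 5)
Your proposal is correct and follows essentially the same route as the paper's proof: split states by whether the sparse reward fires, bound the near part via $\epsilon$ and the far part via $C_g$, use $\sum_s \rho_{\pi,m}(s)=\frac{1}{1-\gamma}$ to express the return as the occupancy of the reached set, and then transfer from model dynamics to real dynamics via $J_m(\pi)\geq J(\pi)-\frac{2\gamma\epsilon_m}{(1-\gamma)^2}$ with $\epsilon_\pi=0$ and $r_{\max}=1$. Your version is in fact slightly more careful than the paper's, since you explicitly flag the $\epsilon$ versus $\epsilon^2$ threshold mismatch and the per-goal versus goal-averaged reading of $J(\pi)$, both of which the paper's proof passes over silently.
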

\begin{proof}
    We first bound the expected distance as:
    \begin{equation}
    \label{eq:ED_upper}
    \begin{aligned}
         ED_{\pi}(g) &= \sum_s \rho_\pi(s) \|\phi(s)-g\|_2^2 \cdot 1[\|\phi(s)-g\|_2^2 \leq \epsilon] + \sum_s \rho_\pi(s) \|\phi(s)-g\|_2^2 \cdot 1[\|\phi(s)-g\|_2^2 > \epsilon] \\
         & \leq \frac{\epsilon}{1-\gamma} + \sum_s \rho_\pi(s) \|\phi(s)-g\|_2^2 \cdot 1[\|\phi(s)-g\|_2^2> \epsilon] \\
         & \leq \frac{\epsilon}{1-\gamma} + C_g \sum_s \rho_\pi(s) \cdot 1[\|\phi(s)-g\|_2^2> \epsilon]
    \end{aligned}
    \end{equation}
    The above derivation leverages the property that:
    \begin{equation*}
        \begin{aligned}
            \sum_s \rho_{\pi}(s)=\sum_s P(s_0=s) + \gamma \sum_s P(s_1=s) 
            + \gamma^2 \sum_s P(s_2=s)+\dots = \frac{1}{1-\gamma}
        \end{aligned}
    \end{equation*}
    As in our setting, rewards only depend on states, thus the RL objective can be written as:
    \begin{equation*}
        \begin{aligned}
             J(\pi) &= E_{s_0\sim p(s_0),a_t\sim \pi(a_t|s_t),s_{t+1}\sim p(s_{t+1}|s_t,a_t)}\big[\sum_{t=0}^{\infty} \gamma^t r(s_t)\big] \\
            &=\sum_{t=0}^{\infty}\sum_s P(s_t=s|\pi) \cdot \gamma^t r(s) 
            = \sum_s \sum_{t=0}^{\infty} \gamma^t P(s_t=s|\pi) \cdot  r(s) \\
            &= \sum_s \rho_\pi(s) \cdot 1[\|\phi(s)-g\|_2^2 \leq \epsilon] \\
            &= \sum_s \rho_\pi(s) \cdot \big[ 1 - 1[\|\phi(s)-g\|_2^2 > \epsilon] \big] \\
            &= \frac{1}{1-\gamma} - \sum_s \rho_\pi(s) \cdot 1[\|\phi(s)-g\|_2^2 > \epsilon]
        \end{aligned}
    \end{equation*}
    Taking the upper bound of $ED_\pi(g)$ in Eq \ref{eq:ED_upper} into $J(\pi)$, we have:
    \begin{equation*}
        \begin{aligned}
            J(\pi) \leq \frac{1}{1-\gamma} - \frac{1}{C_g} [ED_\pi (g) - \frac{\epsilon}{1-\gamma}]
        \end{aligned}
    \end{equation*}
    and 
    \begin{equation*}
        \begin{aligned}
            ED_\pi (g)  \leq  -C_g \cdot J(\pi) +  \frac{C_g+\epsilon}{1-\gamma} .
        \end{aligned}
    \end{equation*}
    The virtual achieved goals are generated with the learned dynamics model, therefore the following inequality also holds for the learned dynamics model:
    \begin{equation*}
        \begin{aligned}
            ED_{\pi,m}(g)  \leq -C_g \cdot J_m(\pi) +  \frac{C_g+\epsilon}{1-\gamma} .
        \end{aligned}
    \end{equation*}
    From the proof of Lemma \ref{lemma:J}, $J^{(\infty)}_m(\pi)$ can be bounded as: $J_m(\pi) \geq J(\pi) - \frac{2\gamma \epsilon_m}{(1-\gamma)^2}$, where $\epsilon_m$ is the model error defined before.
    Finally, we can conclude that:
    \begin{equation*}
        \begin{aligned}
            ED_{\pi,m}(g) &\leq -C_g \cdot J_m(\pi) +  \frac{C_g+\epsilon}{1-\gamma}  \\
            &\leq -C_g \cdot J(\pi) +  \frac{C_g+\epsilon}{1-\gamma} + \frac{2\gamma \epsilon_m C_g}{(1-\gamma)^2} .
        \end{aligned}
    \end{equation*}
    This completes the proof.
\end{proof}

\begin{theorem}[Expected Distance Bound, Finite Horizon]
    Assume the finite horizon for model-based rollout equals the horizon in real environment, i.e., $n=T$. Following Theorem \ref{tm:expected_distance} and [Ross et al. 2011], the expected distance bound for finite horizon is:
     \begin{equation*}
        ED_{\pi,m}(g) \leq -C_g \cdot J(\pi) +  (C_g+\epsilon)T + 2\gamma \epsilon_m C_g T^2
    \end{equation*}
\end{theorem}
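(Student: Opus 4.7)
The plan is to mimic the structure of the infinite-horizon proof (Theorem \ref{tm:expected_distance}) but with finite-horizon bookkeeping: wherever a geometric sum $\sum_{t=0}^{\infty}\gamma^t = \tfrac{1}{1-\gamma}$ appeared, I will substitute a truncated sum $\sum_{t=0}^{T-1}\gamma^t \leq T$, and wherever $\sum_{t=0}^{\infty}\gamma^t t \leq \tfrac{\gamma}{(1-\gamma)^2}$ appeared I will substitute $\sum_{t=0}^{T-1}\gamma^t t \leq \gamma T^2$. This is exactly the pattern that turns the $\tfrac{1}{1-\gamma}$ and $\tfrac{1}{(1-\gamma)^2}$ factors of Theorem \ref{tm:expected_distance} into the $T$ and $T^2$ factors in the claimed bound.

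First I would redefine $\rho_\pi(s)$ and $\rho_{\pi,m}(s)$ as the finite-horizon discounted occupancy, namely $\rho_\pi(s)=\sum_{t=0}^{T-1}\gamma^t P(s_t=s\mid \pi)$, so that $\sum_s \rho_\pi(s)=\sum_{t=0}^{T-1}\gamma^t \leq T$. Then, imitating equation (\ref{eq:ED_upper}) from the infinite-horizon proof, I would split $ED_\pi(g)$ into the near-goal mass (where $\|\phi(s)-g\|_2^2 \leq \epsilon$) and the far-goal mass, yielding
\begin{equation*}
ED_\pi(g) \;\leq\; \epsilon T \;+\; C_g \sum_s \rho_\pi(s)\cdot \mathbf{1}[\|\phi(s)-g\|_2^2 > \epsilon].
\end{equation*}
Next I would rewrite $J(\pi)$ in the same indicator form as in the infinite-horizon argument, obtaining $\sum_s \rho_\pi(s)\cdot \mathbf{1}[\|\phi(s)-g\|_2^2>\epsilon] = \sum_{t=0}^{T-1}\gamma^t - J(\pi) \leq T - J(\pi)$. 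Substituting gives the intermediate real-dynamics bound
\begin{equation*}
ED_\pi(g) \;\leq\; -C_g\cdot J(\pi) + (C_g+\epsilon)T,
\end{equation*}
and the identical derivation in the learned model yields $ED_{\pi,m}(g) \leq -C_g\cdot J_m(\pi) + (C_g+\epsilon)T$.

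The remaining step is to close the gap between $J_m(\pi)$ and $J(\pi)$. Here I would invoke Lemma \ref{lemma:finite_return_bound} directly (rather than its infinite-horizon corollary Lemma \ref{lemma:J}) with the same setting $r_{\max}=1$ and $\epsilon_\pi=0$ that was used in Appendix \ref{ap:proof}, since data are gathered by the current policy. This gives $|J(\pi)-J_m(\pi)| \leq 2\sum_{t=0}^{T}\gamma^t\, t\, \epsilon_m$, and the elementary estimate $\sum_{t=0}^{T}\gamma^t t \leq \gamma\, T^2$ (using $\sum_{t=1}^{T}t = \tfrac{T(T+1)}{2}\leq T^2$ and pulling out one factor of $\gamma$) produces $J_m(\pi) \geq J(\pi) - 2\gamma\epsilon_m T^2$. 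Chaining this with the previous inequality yields the claimed bound.

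The main obstacle is essentially bookkeeping: one has to be careful that every finite-horizon substitution is an upper bound in the right direction, in particular that $\sum_s \rho_\pi(s)\cdot \mathbf{1}[\cdot]$ is upper bounded by $T-J(\pi)$ (not by $\tfrac{1}{1-\gamma}-J(\pi)$) and that the $\sum \gamma^t t$ estimate gives exactly $\gamma T^2$ rather than a slightly smaller constant like $\gamma T(T+1)/2$. There is no new conceptual content beyond Theorem \ref{tm:expected_distance} and Lemma \ref{lemma:finite_return_bound}; the reference to Ross et al.\ 2011 is simply to invoke the standard finite-horizon analogue of the simulation-lemma-style total-variation argument already used in the appendix.
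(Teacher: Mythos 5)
Your proof is correct and follows exactly the route the paper intends: the paper states this finite-horizon bound without an explicit proof (deferring to Theorem \ref{tm:expected_distance} and Ross et al.), and your derivation --- repeating the indicator-splitting argument of Eq.~\eqref{eq:ED_upper} with the truncated occupancy $\sum_t \gamma^t \le T$ in place of $\tfrac{1}{1-\gamma}$, then closing the gap between $J_m(\pi)$ and $J(\pi)$ via Lemma \ref{lemma:finite_return_bound} with $\epsilon_\pi=0$, $r_{\max}=1$ and $\sum_t \gamma^t t \le \gamma T^2$ --- is precisely the intended filling-in of that sketch. The only cosmetic wrinkle is the horizon indexing convention (you take timesteps $0,\dots,T-1$, whereas the paper's trajectories run to $t=T$, which would yield $(C_g+\epsilon)(T+1)$), but this bookkeeping choice does not affect the substance of the argument.
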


\section{Extra Experimental Results}
\label{ap:extra_experiment}
In this section, we provide additional experimental results to further understand MHER and its effectiveness. We report the average success rate and standard deviation across 10 random seeds for all the experiments.

\begin{figure}[htb]
    \centering
    \includegraphics[width=1\linewidth]{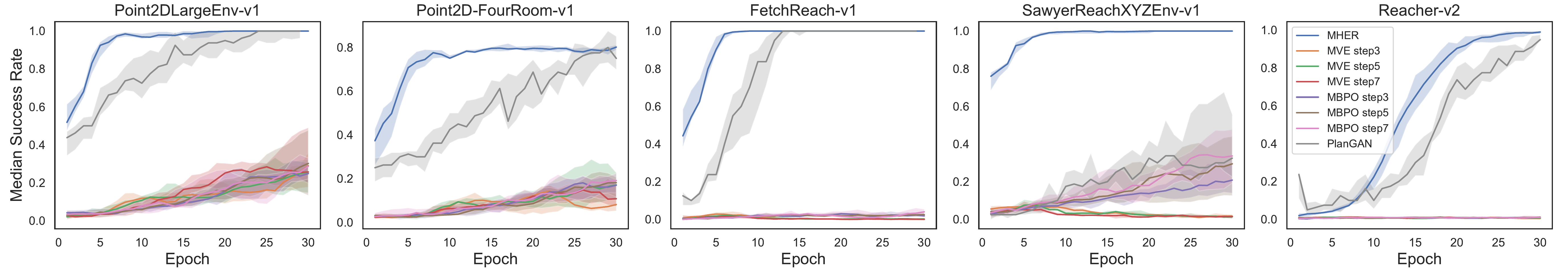}
    \caption{Comparison results with model-based baselines, including PlanGAN.}
    \label{fig:model_bl_planGAN}
\end{figure}

\subsection{Comparison Results with Model-based Baselines}
\label{ap:comparison_with_model_based}
In our paper, we have compared with model-based RL baseines such as MVE \cite{feinberg2018model} and MBPO \cite{DBLP:conf/nips/JannerFZL19}. Furthermore, we include PlanGAN as our baseline, which is not a RL method but a planning method requiring huge amount of computation. The procedure of selection one single action of planner in PlanGAN is: 1) first random sampling 20 initial actions, 2) further sampling 50 trajectories with model and GANs for each initial action, 3) computing average return of each initial action, and 4) selecting the best action in the initial set. Even if we ignore the additional computation to train GANs, PlanGAN needs at least $10^5$ (20 initial action$\times$ 50 trajectories $\times $50 steps $\times 2$ policy and model propagation) times forward propagations than MHER to select a single action. As PlanGAN is highly dependent on the learned model, we train the model and GANs in PlanGAN with a batch size of 128 and each epoch train 25 times. In MHER, we only update the model 2 times each epoch with a batch size of 64, therefore MHER is more computationally friendly compared to PlanGAN. We report the median test success rate and interquartile across 10 random seeds in Figure \ref{fig:model_bl_planGAN}, which apparently shows MHER is more sample efficient than PlanGAN. Moreover, MHER is much more stable compared with PlanGAN and other model-based baselines considering the deviation in the results.

Moreover, we provide comparison results with 
MBPO+HER and MVE+HER in Figure \ref{fig:mbpo_her}. In the two methods, we perform hindsight relabeling before model-based interaction, and then the model-based interaction is also driven by hindsight goals. Through combining with HER, MBPO and MVE work better in sparse reward tasks. However, their performance still cannot surpasses MHER's.

\begin{figure}
    \centering
    \includegraphics[width=1\linewidth]{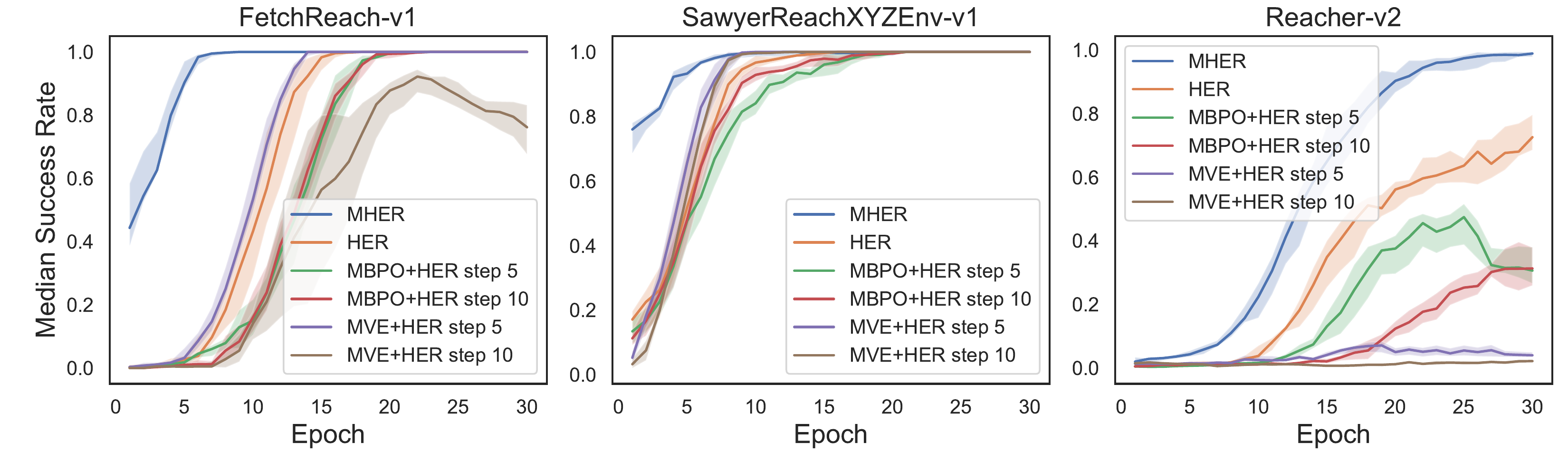}
    \caption{Comparison results with MBPO+HER and MVE+HER in FetchReach-v1, SawyerReachXYZEnv-v1, and Reacher-v2.}
    \label{fig:mbpo_her}
\end{figure}

\subsection{Study of the Model Layers}
In this section, we study the impact of model layers in MHER by varying the number of layers (each of which is of 256 neurons). The empirical results in Figure \ref{fig:layers}  show the number of layers does not affect the performance much and even one layer can perform comparably to 4 layers. The dynamics model with 4 layers achieves stable and good performance in different tasks.

\begin{figure}[htb]
    \centering
    \includegraphics[width=1\linewidth]{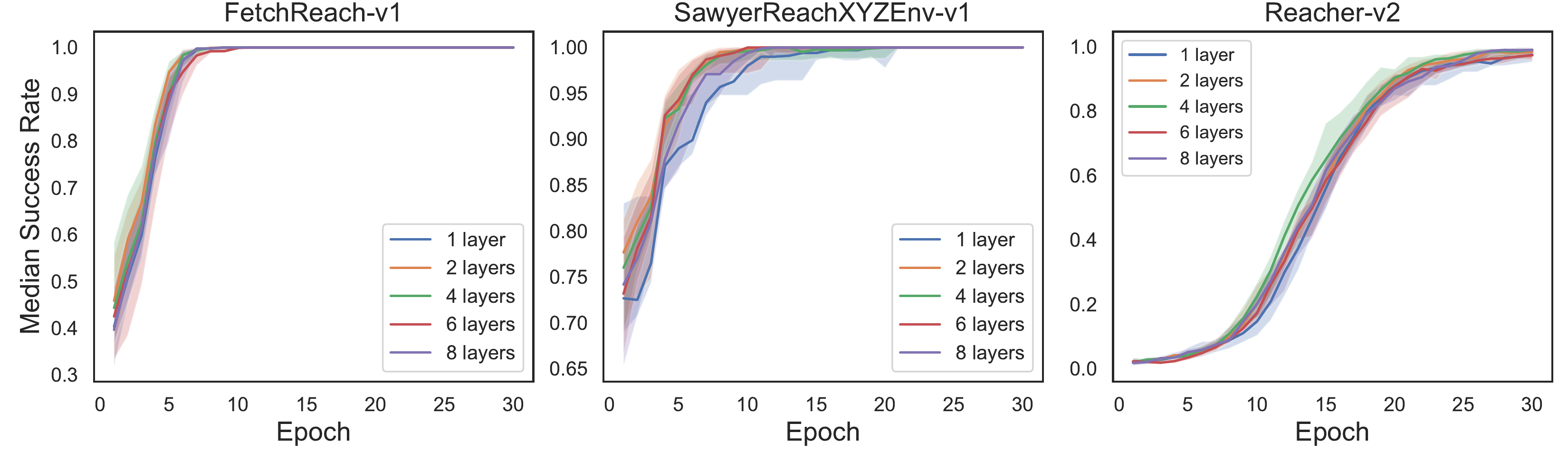}
    \caption{Varying the number of model layers in FetchReach-v1, SawyerReachXYZEnv-v1, and Reacher-v2.}
    \label{fig:layers}
\end{figure}

\subsection{Parameter Study}
Additional parameter studies of $\alpha$ and model-based interaction steps are shown in Figure \ref{fig:parameters_full}. In the results, we can conclude that the performance of MHER improves when parameter $\alpha$ increases from 0 to 3 in the two environments, but decrease slightly when $\alpha$ is beyond 3 in Reacher-v2. Regarding the model-based interaction steps, we can observe a performance increase as steps increase from 0 to 5, and a performance decrease as steps are more than 5.
\begin{figure}[htb]
    \centering
    \includegraphics[width=1\linewidth]{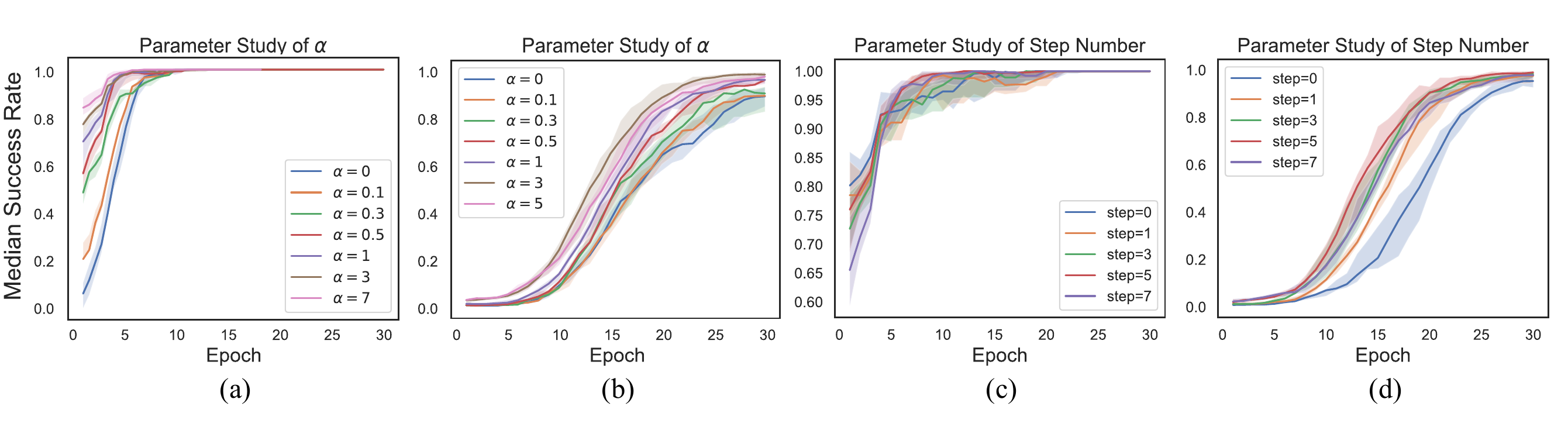}
    \caption{(a)(b) Additional parameter studies of $\alpha$ in SawyerReachXYZEnv-v1 and Reacher-v2. (c)(d) Additional parameter studies of model-based interaction steps in SawyerReachXYZEnv-v1 and Reacher-v2.}
    \label{fig:parameters_full}
\end{figure}

\begin{figure}[htb]
    \centering
    \includegraphics[width=1\linewidth]{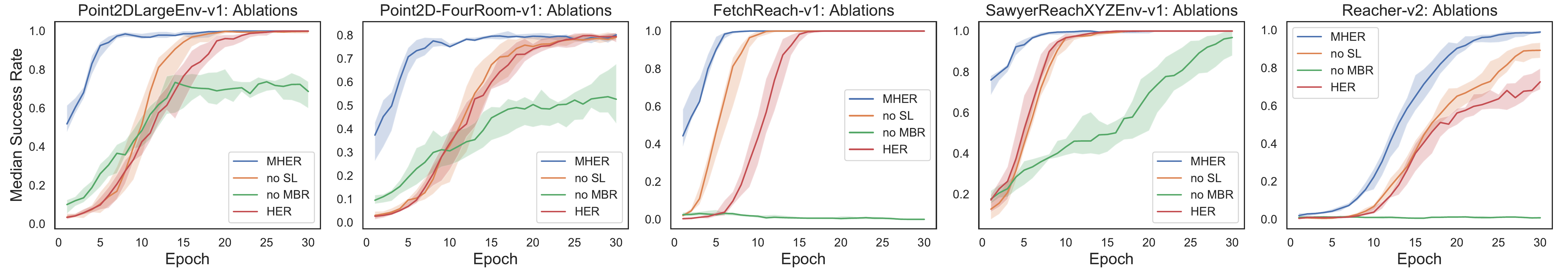}
    \caption{Additional ablation studies in benchmark environments.}
    \label{fig:ablations_full}
\end{figure}

\subsection{Ablations}
We also provide more ablation study results in Figure \ref{fig:ablations_full}. The results are consistent with the conclusions in our paper: (1) MBR is more important than SL, (2) we can outperform HER using MBR alone, and (3) SL is not robust and fails to learn in FetchReach-v1 and Reacher-v2.

\subsection{Additional Comparisons}
We conducted additional experiments in Figure \ref{fig:random_relabeling} to study how does MHER compare to \emph{random relabeling} and HER with \emph{goal noise}. For \emph{random relabeling}, we relabel the transitions with random goals sampled from the goal space. Regarding \emph{goal noise}, Gaussian noise with zero mean and constant (0.01) standard deviation is applied to hindsight goals of HER. We can observe from the results that \emph{goal noise} provides slight improvement over HER in FetchReach-v1 and SawyerReachXYZEnv-v1, but it worsens the result in Reacher-v2. In addition, it seems that \emph{random relabeling} does not work, which might be because it hardly helps with the sparse reward problem.


\begin{figure}[h]
    \centering
    \includegraphics[width=1\linewidth]{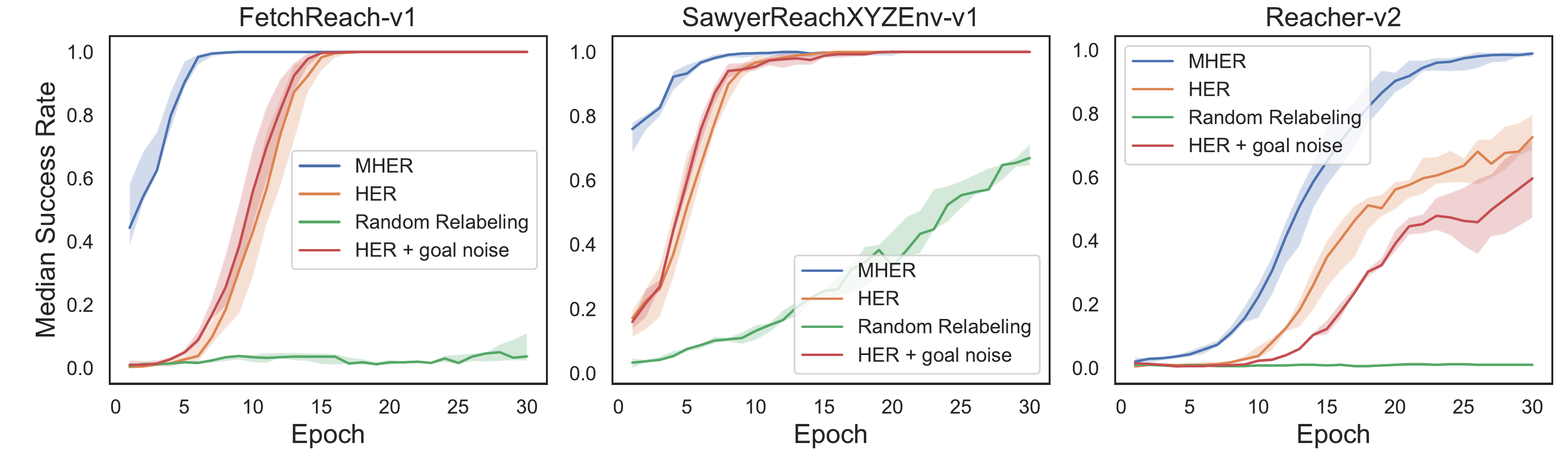}
    \caption{Comparison results between MHER, \emph{random relabeling} and \emph{goal noise}.}
    \label{fig:random_relabeling}
\end{figure}

\section{Details about Virtual Achieved Goals}
\label{ap:virtual_ags}
Previous work, HER \cite{andrychowicz2017hindsight}, assumes there exists a state-to-goal mapping function: $\phi:S\rightarrow G$, i.e., given a state $s$, we can find a goal $g=\phi(s) \in G$ achieved by this state. In the case where each goal corresponds to a state we want to achieve (e.g., 2D point reaching task), goal space equals state space $G=S$ and the mapping $\phi$ is exactly an identity transformation.

Model-based relabeling is a novel goal relabeling method different from previous relabeling methods and enjoys many advantages by leveraging the dynamics model. MHER follows the same setting as HER. The difference is that MHER leverages the virtual states generated from model-based interaction rather than past collected states. As shown in Figure \ref{fig:diagram_ap}, the blue trajectory is collected by past policy, while the green trajectory is generated by interaction of current policy and the learned dynamics model. 

Given a transition in a past collected trajectory $(s_t, a_t, r_t, s_{t+1}, g)$, model-based relabeling (MBR) aims to find a virtual achieved goal to replace the original goal $g$ and reward $r_t$. MBR interacts with the dynamics model $m$ for $n$ steps starting from $s_{t+1}$, and collects a virtual trajectory $\{s'_{t+i}, a'_{t+i}, s'_{t+i+1}\}_{i=1}^{n}$, where $s'_{t+1}=s_{t+1}, a'_{t+i}=\pi(s'_{t+i},g), s'_{t+i+1}=s'_{t+i}+m(s'_{t+i},a'_{t+i}), i\in[1,n]$. Note that we start interaction from $s_{t+1}$, which is reasonable because starting from $s_t$ means $a_t, s_{t+1}$ also need to be replaced. After model-based interaction, MBR samples from virtual achieved goals $g'=\phi(s_{t+i}), i\in[1,n]$ and alternates the original transition as $(s_t,a_t, r'_t, s_{t+1},g')$, where $r'_t$ is the recomputed reward $r'_t=r(s_t,a_t,g')$ according to Eq. \ref{equ:rewardfunction}. The model-based relabeled data can be further used for off-policy reinforcement learning and goal-conditioned supervised learning.
\begin{figure}[htb]
    \centering
    \includegraphics[width=0.85\linewidth, trim=0 0 0 130, clip ]{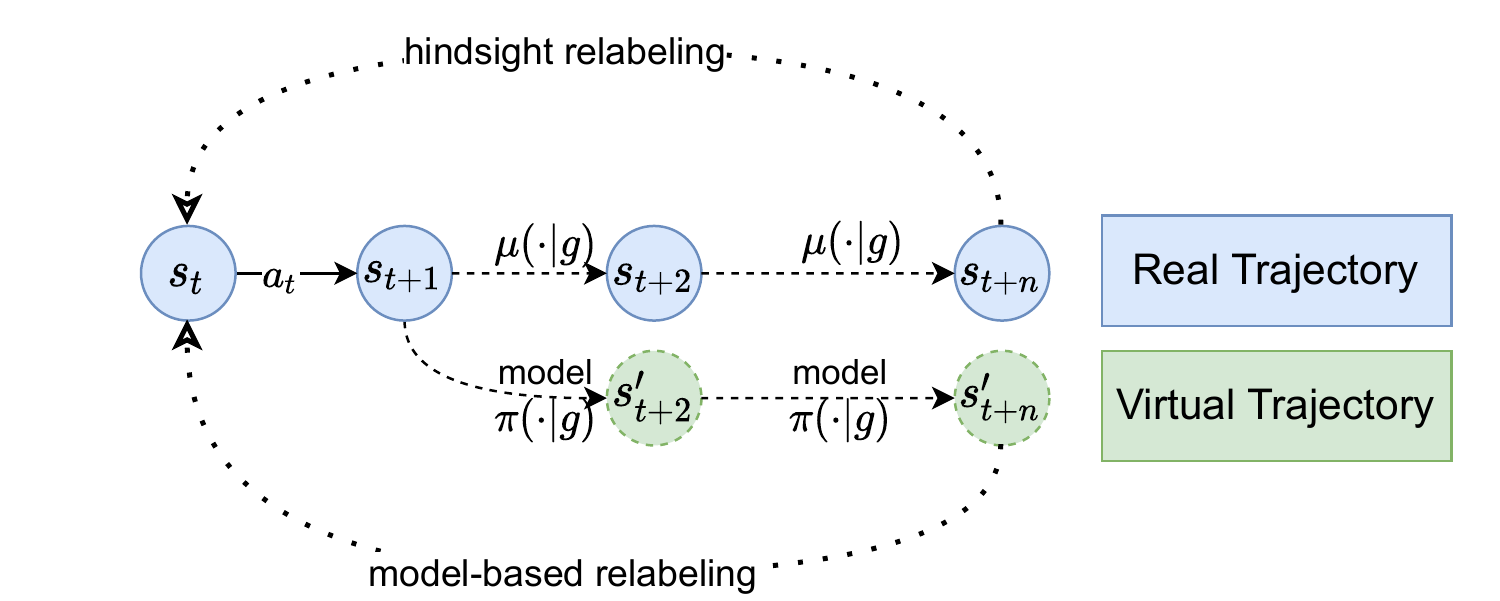}
    \caption{Simplified schematic of model-based relabeling.}
    \label{fig:diagram_ap}
\end{figure}

\section{Implementation Details}
\label{ap:implementation_detail}
\subsection{Implementation of MHER}
In this section, we will provide implementation details of MHER. The actor and critic networks of MHER are both 3-layer fully connected networks with 256 units each and ReLU non-linearities. The actor and critic are updated with Adam optimizer, learning rate $1\times 10^{-3}$. Target nets of actor and critic with the same network structure are adopted for stabilizing training, the Polyak averaging coefficient of the target net is set as $0.9$. The probability of random action is $0.3$, and the scale of Gaussian noisy for exploration is $0.2$. The model-based relabeling rate is $80\%$, and we keep $20\%$ samples without relabeling. MHER uses one rollout worker, and the buffer size is $10^6$. At the end of each episode (including 100 environment steps), we train the networks with $5$ batches of size $64$. We only use one single CPU and one GPU to train MHER. 


Regarding the dynamics model in MHER, we use a fully connected network with $4$ hidden layers and $256$ neurons each layer. The optimizer is Adam and the learning rate is $0.001$. The dynamics model is trained to minimize the loss $\mathcal{L}_{model}$ in Eq. \ref{eq:dynamicloss}. In the warmup period, we train the dynamics model for $100$ updates with a batch size of $512$. When training with MHER, every batch we update $m$ with a batch size of $64$ for $2$ times.

\subsection{Implementation of GCSL}
We implement GCSL with a Diagonal Gaussian policy with a constant standard deviation of $0.2$. GCSL only keeps a single policy network (3 layers, 256 neurons each) as the actor in MHER, without target network. The policy network is trained to minimize the loss 
$$\mathcal{L}_{GCSL} = E_{(s_t,a_t,g') \sim B_h} \big[\|a_t - \pi(s_t,g')\|_2^2 \big]$$
where $g'$ is relabeled using future achieved goals like HER \cite{andrychowicz2017hindsight}. The optimizer is Adam and the learning rate is also $1\cdot10^{-3}$. GCSL shares other parameters with MHER for fair comparison, such as replay buffer size and batch size.

\subsection{Implementation of MVE and MBPO}
Model-based value expansion (MVE) \cite{feinberg2018model} rollouts with a learned dynamics model and introduces multi-step value estimation based on the expanded transitions. Denote the replay buffer as $B$, a real transition $(s_t,a_t,r_t,s_{t+1},g)$, generated transitions $\{\hat s_{t+i}, \hat a_{t+i}, \hat r_{t+i}\}, i\in[1,H], \hat s_{t+1}=s_{t+1}$, MVE updates the Q-function to minimize the following loss:
\begin{equation}
\mathcal{L}_{critic}=E_{(s_t,a_t,g, r_t,s_{t+1})\sim B} \big[( \sum_{i=0}^{H-1} \gamma^i \hat r_{t+i} + \gamma^H Q (\hat s_{t+H}, \pi (\hat s_{t+H}, g), g)-Q(s_t,a_t, g))^2 \big],
\end{equation}
where $\hat r_t=r_t, \hat r_{t+i}=r(\hat s_{t+i}, \pi(\hat s_{t+i}, g), g)$, $r$ is the reward function in Eq. \ref{equ:rewardfunction}. We also use DDPG to learn the policy, and the hyper-parameters of DDPG and dynamics model are the same as MHER. We provide empirical results with varying $H$ in Figure \ref{fig:model_bl_planGAN}.

Model-based policy optimization (MBPO) \cite{DBLP:conf/nips/JannerFZL19} stores short model-generated rollouts branched from real data to the model dataset $D_{model}$, and optimizes policy using the model dataset $D_{model}$. For fair comparison, we also utilize DDPG for policy optimization and the size of $D_{model}$ is $1\cdot 10^6$. Hyper-parameters of DDPG and dynamics model are the same as MHER. The results with model-based horizon $\{3,5,7\}$ are reported in Figure \ref{fig:model_bl_planGAN}.

\section{Comparison with Concurrent Work}
We noticed a concurrent work, MapGo \cite{zhu2021mapgo}, which shares similar idea of model-based relabeling (MBR) with us. However, MapGo doesn't exploit the MBR goals deeper. Instead, we further leverage MBR goals for supervised policy learning with theoretical guarantees. Moreover, MapGo utilizes a large number of virtual rollout for policy improvement, and thus introduces model error in both states and actions. In Section \ref{sec:MBR}, we have emphasized that MHER avoids training with fully virtual states, and only the goals in the training data are generated by the model. Therefore, our method can be more robust to model errors. Despite the differences, MapGo found the model cannot fit exactly for hard manipulation tasks, which we have also observed in our experiments and we believe it is worth further research.

\section{Task Descriptions}
\label{ap:task_description}
All of the five tasks have continuous state space, action space and goal space. In this section, we will introduce these tasks in detail.
\subsection{Point2DLargeEnv-v1}
\begin{wrapfigure}{r}{2.3cm}
\centering
  \vspace{-25pt}    
  \includegraphics[height=1.8cm,width=1.8cm]{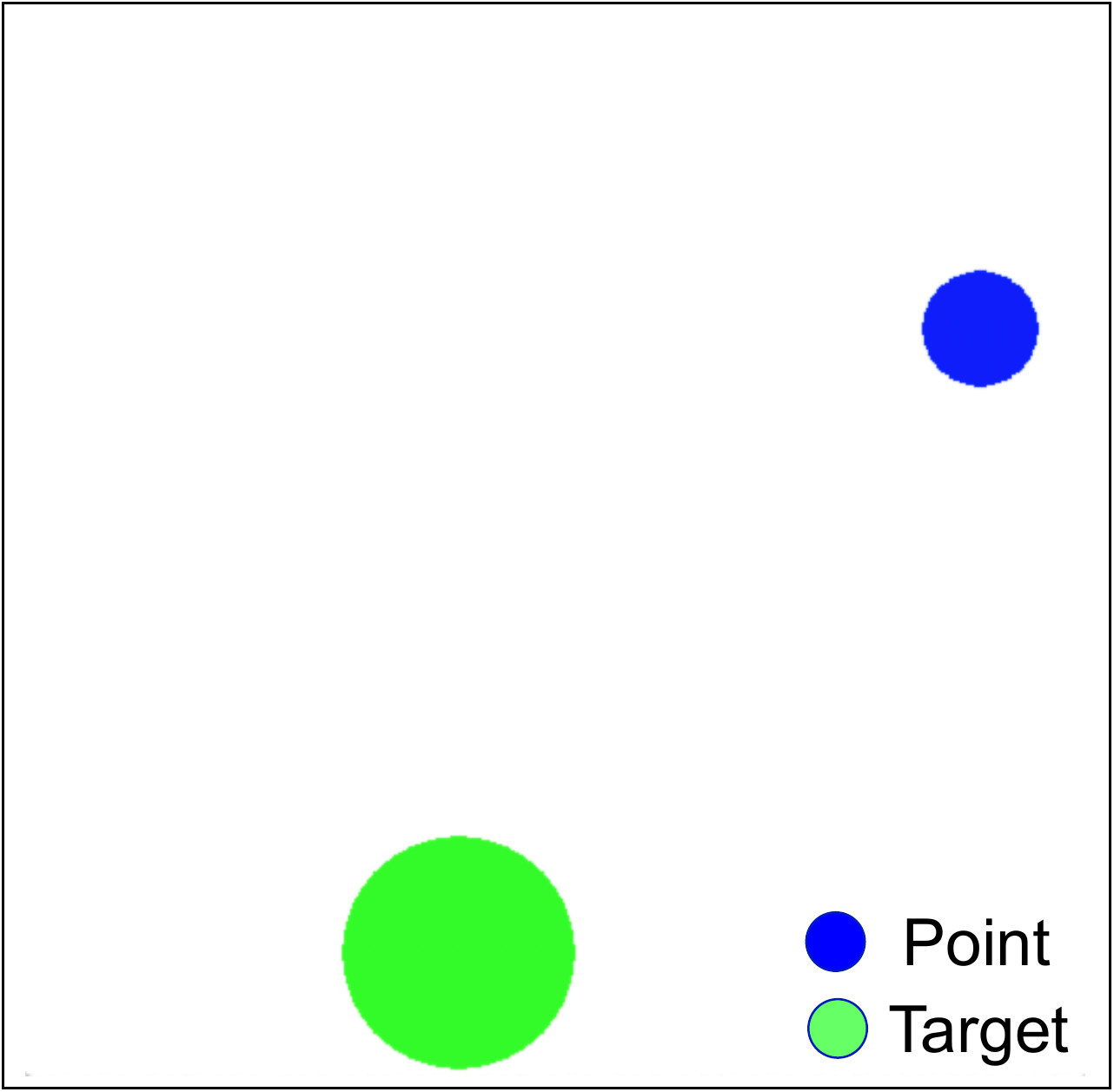}\\
  \vspace{-0pt}    
  \caption*{Point2DLarge}
  \vspace{-8pt}    
\end{wrapfigure}
Point2DLargeEnv-v1 is taken from the open-source package \emph{multi-world} and it requires the blue point to reach the green circle. The state space $[-5,5]\times[-5,5]$ has two dimensions representing Cartesian coordinates of the blue point, and the action space $[-1,1]\times[-1,1]$ also has two dimensions meaning the horizontal and vertical displacement. The goal space is the same as state space, which means $\phi(s)=s$. The bule point and the green circle are randomly initialized in the state space. The allowable error $\epsilon$ of reaching goal is the radius of the target circle and is set as $1$. The reward function is defined as:
\begin{equation*}
    r(s_{XY},a,g_{XY}) = - 1(\|s_{XY}-g_{XY} \|_2^2 > \epsilon) .
\end{equation*}

\subsection{Point2D-FourRoom-v1}
\begin{wrapfigure}{r}{2.3cm}
\centering
  \vspace{-40pt}    
  \includegraphics[height=1.8cm,width=1.8cm]{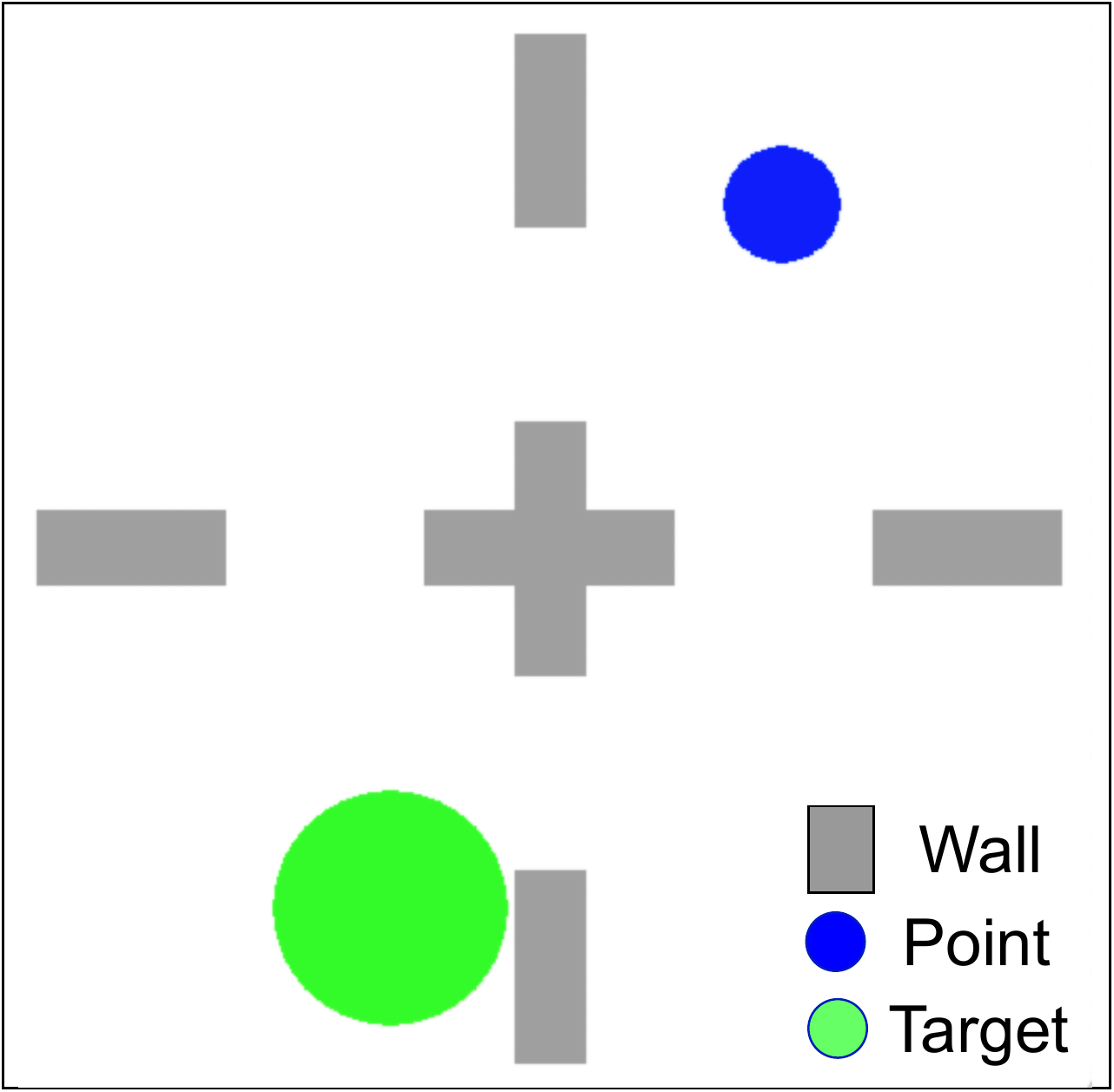}\\
  \vspace{-0pt}    
  \caption*{PointFourRoom}
  \vspace{-0pt}    
\end{wrapfigure}
The Point2D-FourRoom-v1 environment is also built on \emph{multi-world}. The state space, the action space, the goal space, and the reward function are same as Point2DLarge-v1. The difference is that there are four rooms separated by gray walls.

\subsection{FetchReach-v1}
\begin{wrapfigure}{r}{2.3cm}
\centering
  \vspace{-10pt}    
  \includegraphics[height=1.8cm,width=1.8cm]{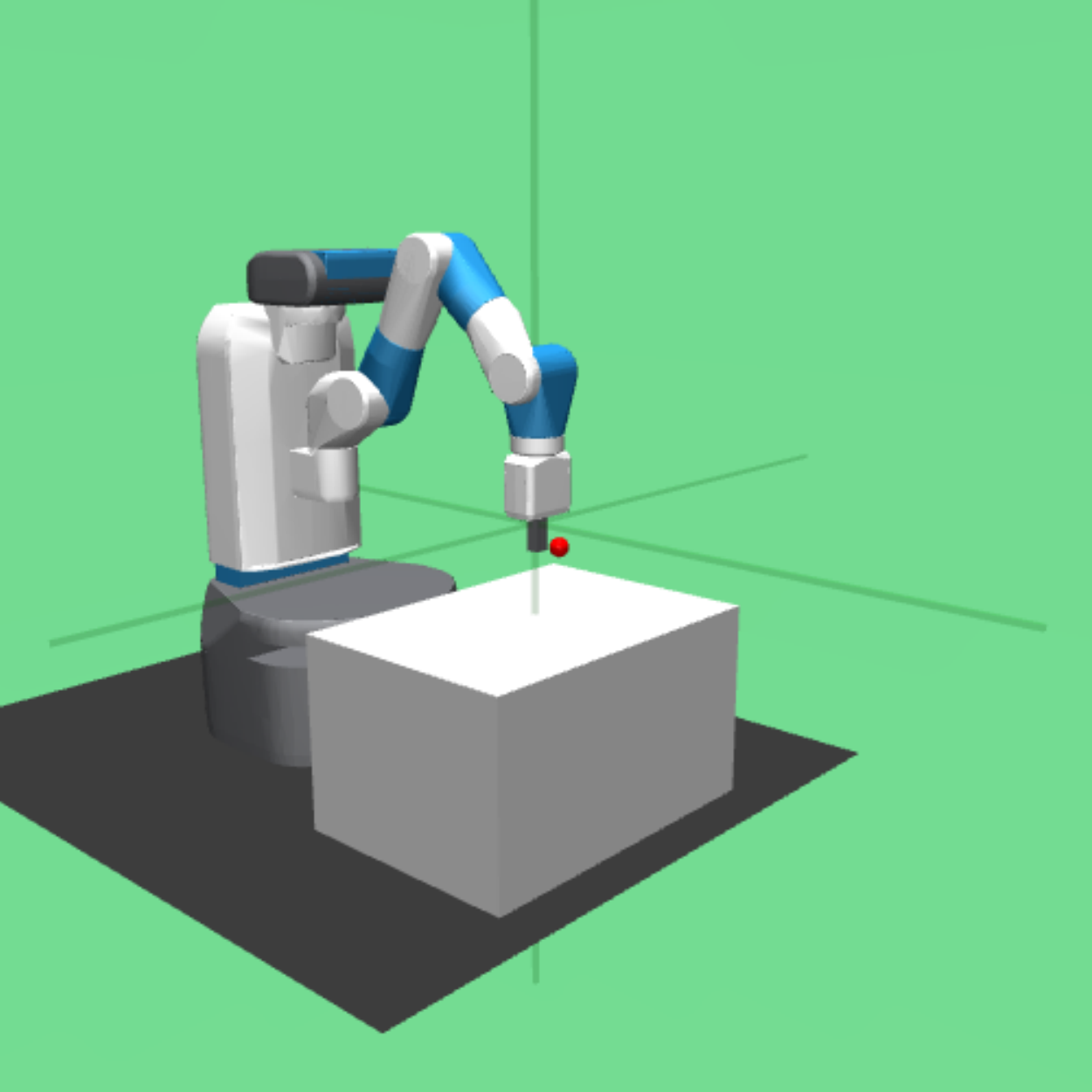}\\
  \vspace{-0pt}    
  \caption*{FetchReach}
  \vspace{0pt}    
\end{wrapfigure}
The FetchReach-v1 environment is taken from OpenAI Gym \cite{brockman2016openai}. In this environment, a 7-DoF robotic arm is expected to touch a desired location with its two-finger gripper. The state space is 10-dimensional, including the gripper’s position and linear velocities. The action space is 4-dimensional, which represents the gripper’s movements and its status about the opening and closing. Moreover, the goals are 3-dimensional vectors representing the target place of the gripper. The state-to-goal mapping is $\phi(s)=s[0:3]$, because the first $3$ dimensions of state describe the position of the gripper. The allowable error in FetchReach is $\epsilon=0.05$. The reward function is defined as:
\begin{equation*}
    r(s_{XYZ},a,g_{XYZ}) = - 1(\|s_{XYZ}-g_{XYZ} \|_2^2 > \epsilon) .
\end{equation*}

\subsection{SawyerReachXYZEnv-v1}
\begin{wrapfigure}{r}{2.3cm}
\centering
  \vspace{-10pt}    
  \includegraphics[height=1.8cm,width=1.8cm]{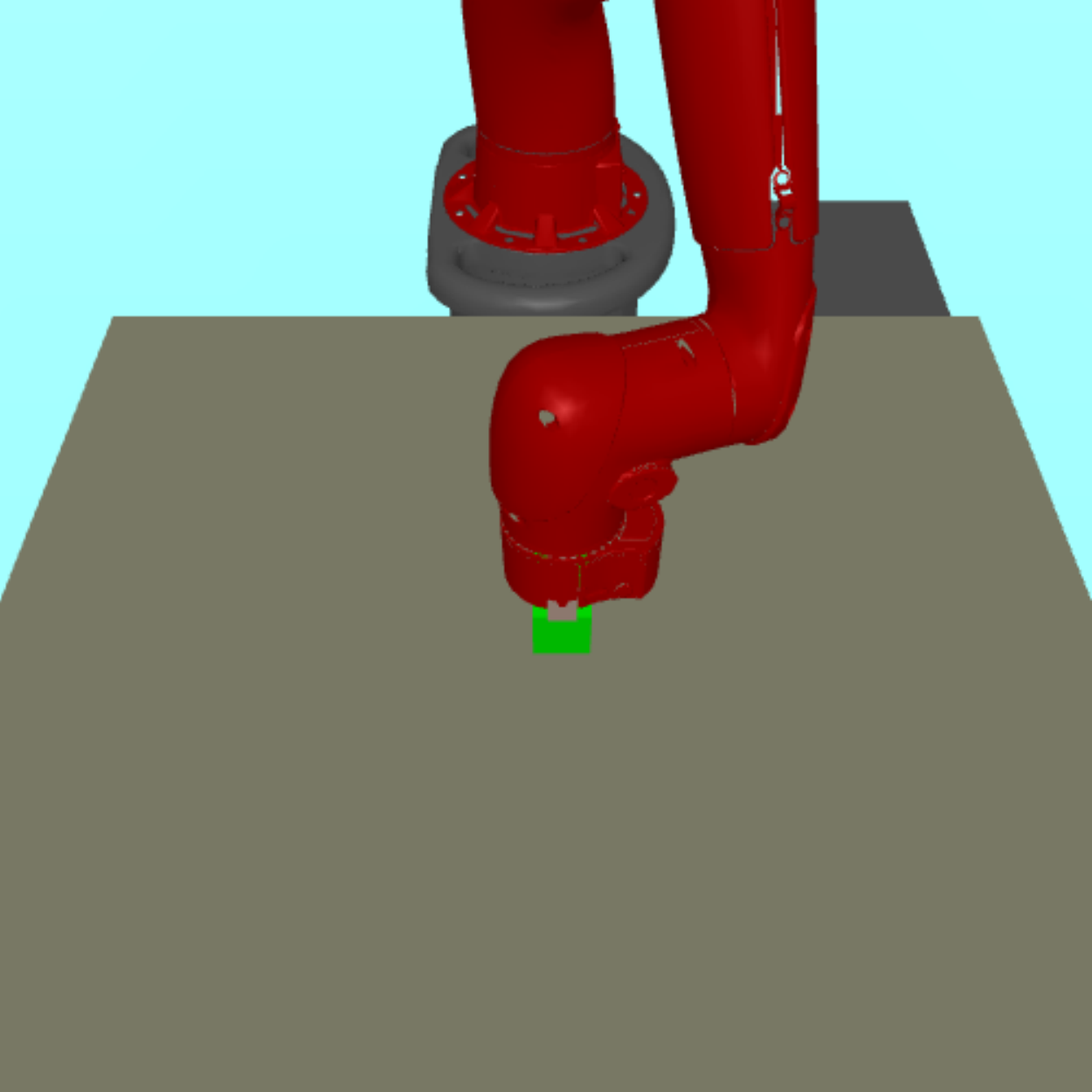}\\
  \vspace{-0pt}    
  \caption*{SawyerReach}
  \vspace{-8pt}    
\end{wrapfigure}
The SawyerReachXYZEnv environment is taken from \emph{multi-world}. The Sawyer robot aims to reach a target position with its end-effector. The observation space is 3-dimensional, representing the 3D Cartesian position of the end-effector. Correspondingly, the goal space is 3-dimensional and describes the expected position, and the state-to-goal mapping is $\phi(s)=s$. Besides, the action space has 3 dimensions describing the next position of the end-effector. The reward function is the same as FetchReach except the allowable error $\epsilon=0.06$.

\subsection{Reacher-v2}
\begin{wrapfigure}{r}{2.3cm}
\centering
  \vspace{-10pt}    
  \includegraphics[height=1.8cm,width=1.8cm]{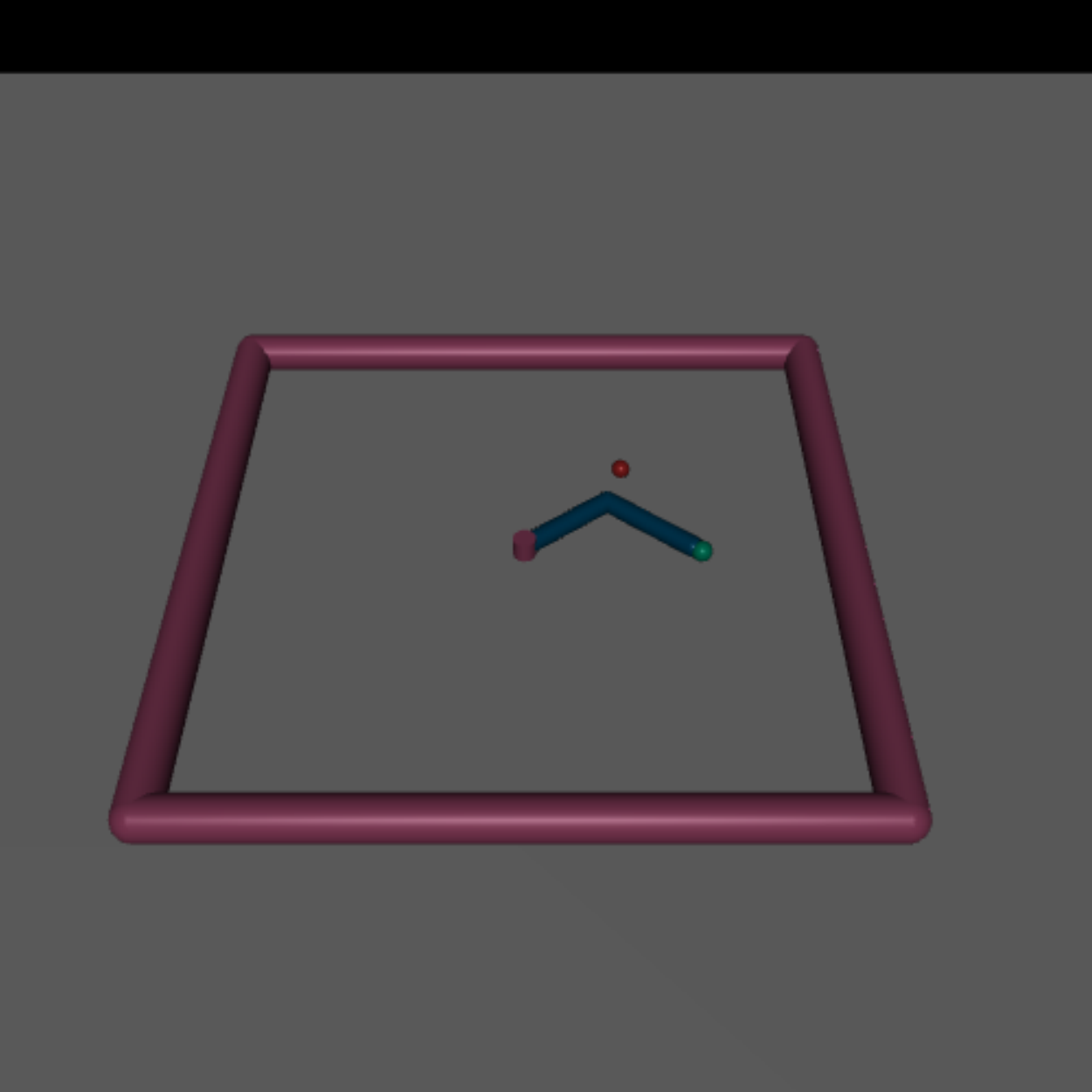}\\
  \vspace{-0pt}    
  \caption*{Reacher}
  \vspace{-8pt}    
\end{wrapfigure}
The Reacher environment is revised from OpenAI Gym \cite{brockman2016openai}. States are 11-dimensional, which indicates the angles, the positions, and the velocity of the joints. Actions are $2$-dimensional and control the movement of two joints. The goals are $2$-dimension representing the expected XY position. And the state-to-goal mapping is $\phi(s)=s[-3:-1]$, where the last three dimensions are the XYZ position of the end-effect. The reward function is defined as:
\begin{equation*}
    r(s_{XY},a,g_{XY}) = - 1(\|s_{XY}-g_{XY} \|_2^2 > \epsilon),
\end{equation*}
where the allowable error $\epsilon$ is set as $0.02$.

\end{document}